\newtheorem{theorem}{Theorem}[section]
\newtheorem{corollary}{Corollary}[theorem]
\newtheorem{lemma}[theorem]{Lemma}
\newtheorem{proposition}{Proposition}
\DeclareMathAlphabet\mathbfcal{OMS}{cmsy}{b}{n}
\begin{document}

%
\runningtitle{Influence of Task Formulation on Neural Network Features}

%

\twocolumn[

\aistatstitle{Regression as Classification: \\
Influence of Task Formulation on Neural Network Features}

\aistatsauthor{ Lawrence Stewart \And Francis Bach \And Quentin Berthet \And  Jean-Philippe Vert  }

\aistatsaddress{ INRIA \& ENS \\PSL Research University \\ Paris  \And  INRIA \& ENS  \\ PSL Research University \\ Paris \And Google Brain \\ Paris \And Google Brain\\Paris } ]

\begin{abstract}
Neural networks can be trained to solve regression problems by using gradient-based methods to minimize the square loss. However, practitioners often prefer to reformulate regression as a classification problem, observing that training on the cross entropy loss results in better performance. By focusing on two-layer ReLU networks, which can be fully characterized by measures over their feature space, we explore how the implicit bias induced by gradient-based optimization could partly explain the above phenomenon. 

We provide theoretical evidence that the regression formulation yields a measure whose support can differ greatly from that for classification, in the case of one-dimensional data. Our proposed optimal supports correspond directly to the features learned by the input layer of the network. The different nature of these supports sheds light on possible optimization difficulties the square loss could encounter during training, and we present empirical results illustrating this phenomenon.
\end{abstract}

\section{INTRODUCTION}\label{section:intro}

Two of the most commonplace supervised learning tasks are regression and classification. The goal of the former is to predict real-valued labels for data, whilst the goal of the latter is to predict discrete labels. Regression models are conventionally trained using the squared error loss, whilst classification models are typically trained using the cross-entropy loss.

Over the past years, neural networks have notably advanced scientific capabilities for both classification and regression problems \citep{deeplearningbook}. In addition, neural networks have desirable attributes, such as their ability to learn complex non-linear functions, as well as exhibiting adaptivity to low-dimensional supports, smoothness and latent linear sub-spaces  \citep[see][]{bach2017breaking}.

Some examples of advances in classification can be found in computer vision \citep{vision1_alex,vision2_resnet,vision3_inception,vision4_efficientnet} and natural language processing \citep{nlp1_seq2seq,nlp2_alignattention,nlp3_transformer}. Similarly, neural networks have achieved the state of the art on regression problems, such as pose estimation \citep{reg1_facialcasacde,reg2_deeppose,reg3_robust,reg4_3dheadpose}. Interestingly, it can be remarked that the amount of scientific work applying neural networks to classification tasks significantly outweighs that for regression problems.

The predictive power of neural networks does not come without drawbacks. Unlike kernel methods \citep{kernelsbook, harmonic_semigroups}, to which neural networks are closely related, there are no optimization guarantees for finite neural networks, which may become stuck in local minima of the loss function. The existence of such local minima is a consequence of the non-convexity of loss functions with respect to the weights of deep neural networks that have non-linearities between layers. 

The undesirable convergence to a local minimum of a loss function typically leads to under-fitting.  Local minima are often encountered in training even when the data are generated directly from a teacher neural network \citep{spurrious}. Over-parametrization can sometimes help to alleviate this problem \citep{overparam_inductive_bias, characterizing_goodfellow}, but this is not guaranteed \citep{bad_overparam}.

A commonly seen practise within the machine learning community is the transformation of regression problems into classification problems. Instead of training a neural network using the square loss function on the original regression problem, one instead trains the model using the cross entropy loss on a new discretized classification task. Such a reformulation can often yield better performance, despite the cross entropy loss having no notion of distance between classes.

There are several synonymous names referring to the above practise:  discretizing, binning, quantizing or digitizing a regression problem.  Throughout this paper, we will refer to this practise as the \textit{binning phenomenon}. We provide some examples of literature utilizing this technique, but our list is certainly not exhaustive. 

\citet{app_imagecolor} found discretizing the \textit{``ab'' color-space} yielded better predictions for image colorization. Similarly, by binning the pixel space, \citet{app_PRNN} improved upon previous regression-based approaches \citep{regpixel1, regpixel2} for generative image modelling. Reformulation of regression as classification has also led to state-of-the-art performance in the fields of age estimation \citep{age_estimation}, pose estimation \citep{pose_binning}, and reinforcement learning \citep{rubiks,deepmindmu}. The practise is also seen outside of academic research, for example in the winning solution of the \href{https://deepsense.ai/deep-learning-right-whale-recognition-kaggle/}{NOAA Right Whale Recognition} Kaggle challenge\footnote{https://deepsense.ai/deep-learning-right-whale-recognition-kaggle/}.


\subsection{Contributions}
The goal of this paper is to examine how the implicit bias obtained when training neural networks with gradient-based methods could provide one possible explanation to the binning phenomenon. In order to utilize recent results on optimization \citep{globalconvergence_bach} and implicit bias  \citep{classbias_bach,regbias_boursier}, we restrict ourselves to the case of two layer neural networks with the ReLU non-linearity \citep{relu}. Our contributions are the following:

\begin{itemize}
    \item We study two simplified problems which closely relate to the implicit biases induced when training over-parameterized models on the square and cross entropy losses, in the case of one-dimensional data. In particular, we provide supports of optimal measures for both of these problems. These supports correspond directly to the features learnt by finite networks.
    \item We postulate that a sparse optimal support for the regression implicit bias could result in optimization difficulties, shedding light on one possible explanation for the binning phenomenon. We provide synthetic experiments which exhibit this behaviour.
\end{itemize}

The code to reproduce our experiments can be found at \url{https://github.com/LawrenceMMStewart/Regression-as-Classification}.

\subsection{Limitations}
Our analysis and empirical results only demonstrate the link between implicit biases and the binning phenomenon for two-layer neural networks. Experimentation showed that deeper models did not suffer under-fitting on our synthetic problem when trained on the square loss (see Appendix \ref{appendix:3layer}). Secondly, the optimal supports we propose are for problems that closely resemble the implicit biases of \citet{regbias_boursier,classbias_bach}. The re-parameterization we invoke to simplify analysis of the feature space introduces a factor into the total variation, which for simplicity we ignore. Finally, the link between our proposed supports and optimization is only seen empirically. Producing theory to describe whether or not a regression problem will encounter optimization difficulties as a consequence of implicit biases remains a difficult open problem.

\subsection{Notation}
 For any $n\in \mathbb{N}$, let $[n] = \{1, \ldots, n\}$. For a vector $x\in\mathbb{R}^d$ and $l\in [d]$, let $x_{[l]}\in\mathbb{R}^l$ denote the vector consisting of the first $l$ indices of $x$. Let $e_j$ denote the $j^{\small{th}}$ canonical basis vector of $\mathbb{R}^k$. Let $S^{d-1} =\{x\in\mathbb{R}^d : \: \lVert x \rVert_2 = 1 \}$. Let $\left(\cdot\right)_+ = \text{max}(\cdot,0)$ denote the ReLU non-linearity, where the maximum is taken element-wise. Let $\Omega_*$ denote the dual norm of $\Omega$, a norm on $\mathbb{R}^k$.  Let $\mathds{1}(x=v)$ denote the indicator function, taking the value of $1$ if $x=v$, otherwise $0$ for $x\not=v$.  Let $I_S:\mathbb{R}^k \rightarrow \{0, \infty\} $ denote the characteristic function of convex set $S\subseteq\mathbb{R}^k$, where $I_S(y)=0$  if $y\in S$, otherwise $I_S(y)=\infty$.  Let $\sigma_S$ denote the support function of convex set $S\subseteq\mathbb{R}^k$, defined as $\sigma_{S}(y) = \sup\limits_{w\in S}w^Ty$. Let $s:\mathbb{R}^k \rightarrow \mathbb{R}^k $ denote the softmax function, where $\left(s(v)\right)_j = {e^{v_j}}/ {\sum_{l=1}^k e^{v_l}}$.

\section{FORMULATING REGRESSION AS CLASSIFICATION}\label{section:binning}
 Let $(x_1,y_1),\ldots, (x_n,y_n) \in\mathbb{R}^d\times [0,1]$ denote the train data set for a regression problem, where we have assumed without loss of generality that the labels $y_1,\ldots,y_n$ have been normalized to the unit interval. To discretize the regression data, divide the interval $[0,1]$ into $k$ bins with midpoints given by  $\lambda \in \mathbb{R}^k$, where $0=\lambda_1<\cdots<\lambda_k=1$. The new discrete labels $\tilde{y}_i\in \arg\min_{j\in[k]} |y_i - \lambda_j|$ correspond to which of the $k$ bins each of the $y_i$ falls into, taking the left-most bin in case of ties. Figure \ref{fig:binning_diag} visually depicts this process.

The newly discretized data $\{(x_i,\tilde{y}_i)\}_{i=1}^n$ can then be used to train a classifier $f:\mathbb{R}^d \rightarrow \mathbb{R}^k$. If obtaining a real-valued prediction is imperative, one can take the expected value over the bins $s\left(f(x)\right)^T\lambda \in \mathbb{R}$.

\begin{figure}[t]
    \centering
    \includegraphics[width=0.8\linewidth]{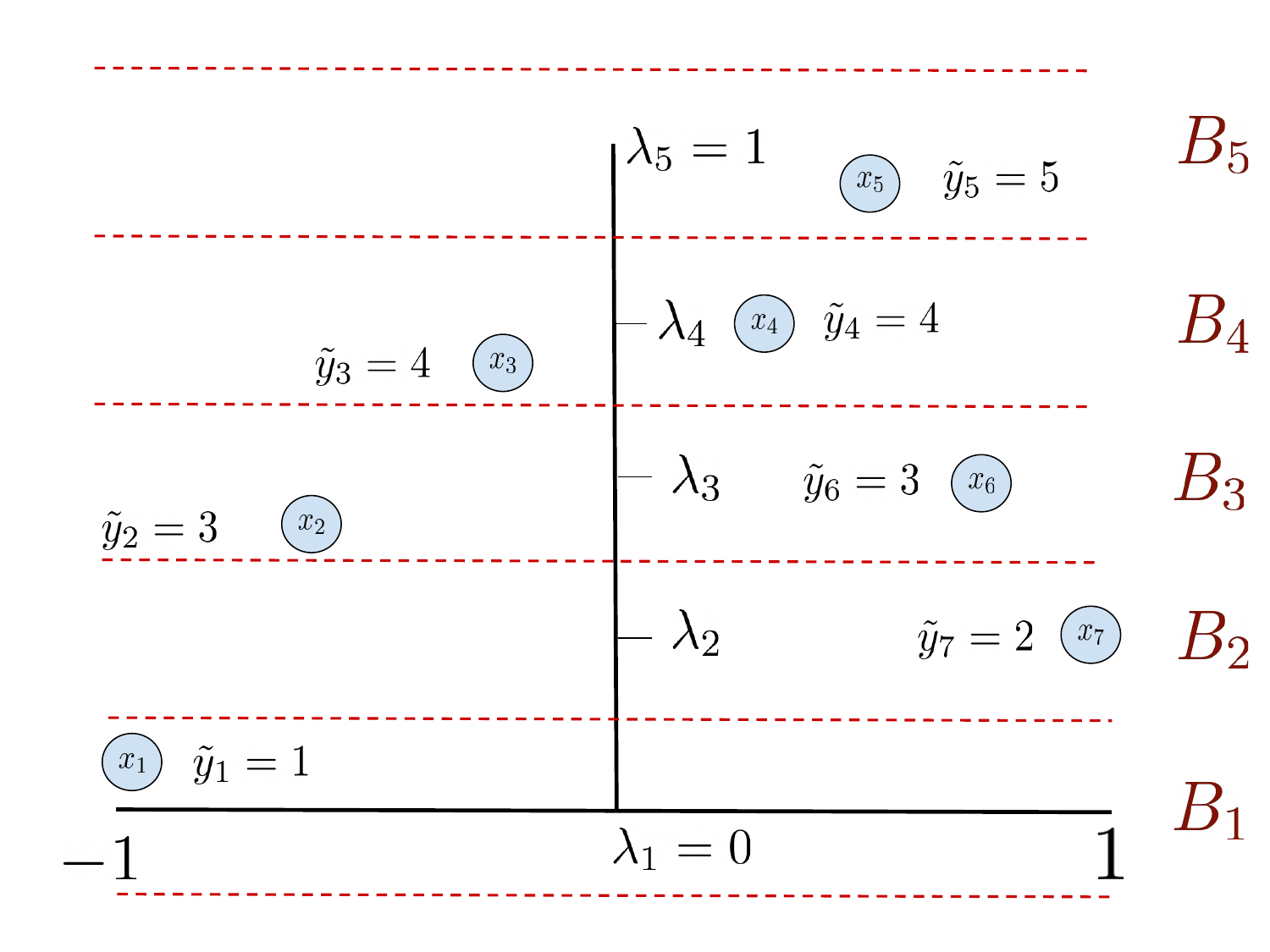}
    \caption{Depiction of binning / discretizing regression data $\{(x_i,y_i)\}_{i=1}^6$ using $k=5$ bins $B_1, \dots, B_k$, each of uniform size with midpoints $0=\lambda_1< \cdots< \lambda_k = 1$. Here $x_i\in[-1,1]$ and $y_i\in [0,1]$. The new labels $\tilde{y}_i \in [k]$ correspond to which of the $k$ bins $B_1,\ldots,B_k$ the labels $y_i$ fall into. }
    \label{fig:binning_diag}
\end{figure}

\section{NEURAL NETWORKS}

\subsection{Finite Sized Neural Networks}
Let $x\in \mathbb{R}^d$ be a vector whose final entry is one\footnote{This notation combines the \textit{constant terms} of neural networks with the parameters (instead of treating them separately) by appending one to the data vector.}, i.e., $x_d = 1$ and $x_{[d-1]}\in\mathbb{R}^{d-1}$. Let $a=\left( a_1, \ldots, a_m\right) \in \mathbb{R}^{m\times d}$ and $b = \left( b_1, \ldots, b_m\right) \in \mathbb{R}^{m\times k}$ denote matrices which we refer to as the input layer and output layer respectively. A two-layer ReLU neural network $F_{a,b}:\mathbb{R}^d \rightarrow \mathbb{R}^k$ is defined as:
\begin{equation}\label{eq:NeuralNetwork}
\forall x \in \mathbb{R}^d\,,\quad
    F_{a,b}(x) = \sum_{j=1}^m b_j(a_j^Tx)_+ \,.
\end{equation}
The above equation is equivalent to the common convention of writing the linear and constant terms of the model separately:
\begin{equation}
    F_{a,b}(x) = \sum_{j=1}^m b_j \big(\,{\underbrace{a_{j,[d-1]}}_{\text{linear}}}^Tx_{[d-1]} + \underbrace{a_{j,d}}_{\text{constant}}\big)_+ .
\end{equation}
A two-layer neural network can be thought of as a model that jointly learns a set of features $\big\{(\,a_j^T\, \cdot\, )_+\big\}_{j=1}^m$ and a linear weighting $\{b_j\}_{j=1}^m$ over these features.

Since the ReLU is positively homogeneous, one can re-normalize the weights $a_j \leftarrow \frac{a_j}{\lVert a_j \rVert}$ and $b_j \leftarrow b_j \, \lVert a_j \rVert$ so that $a_j \in S^{d-1}$, without affecting $F_{a,b}$. Without loss of generality, we will assume throughout that $F_{a,b}$ has layers re-normalized in such fashion.

\subsection{Infinite Width Neural Networks}

An extension of the above is to consider models that learn a linear weighting over the set of all features $\{(a^T\cdot)_+ : a\in S^{d-1}\}$. Such models are called infinite-width neural networks and are expressed via measures, which now take the place of the output layer $b$.

Let $\mathcal{M}(S^{d-1}, \mathbb{R}^k)$ be the set of signed Radon measures \citep{rudin,measure} over $S^{d-1}$ taking values in $\mathbb{R}^k$. An infinite width network characterized by $\nu\in \mathcal{M}(S^{d-1}, \mathbb{R}^k)$ is defined as:

\begin{equation}\label{eq:MeasureNeuralNetwork}
 F_\nu(x) = \int_{S^{d-1}} (a^Tx)_+ d\nu(a) \quad \in \mathbb{R}^k.
\end{equation}

The finite models described by equation \eqref{eq:NeuralNetwork} can also be expressed in the infinite-width form by taking $\nu^{(a,b)} = \sum_{j=1}^m b_j \delta_{a_j}$. With a slight abuse of notation we can write $F_{a,b} = F_{\nu^{(a,b)}}$ to represent this.

\section{IMPLICIT BIAS}

Gradient-based optimization methods can result in a preference for certain solutions to a problem, known as an implicit bias. Possibly the simplest example of this is logistic regression (with no regularization), where training a linear predictor on a linearly separable dataset via (stochastic) gradient descent yields a solution that converges to the max-margin solution \citep[Theorem 3]{soudry2018implicit}. Similar results hold for least-squares linear regression \citep{geombias}.

The implicit bias of both linear neural networks \citep{classimpl_linear1, classimpl_linear2, classimpl_linear3} and homogeneous neural networks \citep{classimpl_homo, classbias_bach} has been studied for models trained to minimize a classification loss function with exponential tails, such as the cross entropy and exponential loss. Similar results exist for finite width two-layer networks trained with the square loss on regression problems \citep{regbias_boursier}.

\subsection{Regression}
Let $x_1, \ldots, x_n \in \mathbb{R}^d$ be data with labels $y_1, \ldots, y_n\in\mathbb{R}$. With assumptions on the data\footnote{Whilst the implicit bias for models trained on the square loss is observed empirically in experiments, the proof is restricted only to the case of orthonormal data.}, \citet[Section 3.2]{regbias_boursier} show that the gradient flow for a two-layer ReLU network trained on the square loss converges to a measure solving the following problem:
\begin{equation}\label{eq:boursier_bias} \begin{aligned} & \inf\limits_{\nu\in\mathcal{M}(S^{d-1}, \mathbb{R})} &&\int_{S^{d-1}}\left\lvert d\nu(a) \right\rvert \\ & \text{subject to}   &&F_\nu(x_i) = y_i \quad \forall i\in [n].\end{aligned} \end{equation}

For finite sized neural networks, this implicit bias selects networks which have minimum $\ell_1$-norm on their output layer from the set of all networks achieving zero square loss on the train set.

\subsection{Classification}
Let $x_1, \ldots, x_n \in \mathbb{R}^d$ be data with discrete labels $y_1, \ldots, y_n\in[k]$. Extending \citet[Theorems 3 and 5]{classbias_bach} from the logistic to soft-max loss (which corresponds to using Theorem 7 from~\citet{soudry2018implicit} instead of Theorem 3, see also Appendix \ref{appendix:bach_extension}), the gradient flow for an infinitely sized neural network trained on the cross entropy loss (multi-class classification) converges to a solution of:
  \begin{equation}\label{eq:bach_bias} \begin{aligned} & \inf\limits_{\nu\in\mathcal{M}(S^{d-1}, \mathbb{R}^k)} &&\int_{S^{d-1}} \left\lVert d\nu(u) \right\rVert  & \\ & \text{subject to}   &&(e_{y_i} - e_l)^T F_\nu(x_i) \geq  \mathds{1}(y_i\not=l), \\ & && \forall i\in [n], \quad \forall l\in[k].\end{aligned} \end{equation}

From the viewpoint of finite networks, the above implicit bias selects models whose output layer weight matrix is of minimum $\ell_1 / \ell_2$ group norm \citep[Section 1.3]{bach_l1book} from the set of all networks who satisfy a hard-margin constraint on class predictions for the train set. 

\section{RE-PARAMETERIZATION}\label{section:reparam}
\subsection{Change of Variable}

In this section, we re-parameterize the feature space $S^1$ of the infinite-width networks described in equation \eqref{eq:MeasureNeuralNetwork}, in the case of one-dimensional data. This allows us to study simplified problems that are closely related to problems \eqref{eq:boursier_bias} and~\eqref{eq:bach_bias}.

For the case of real-valued data $x\in \mathbb{R}$, we modify the notation of equation \eqref{eq:MeasureNeuralNetwork} to write:

\begin{equation}\label{eq:1dnet}
    F_\nu(x) = \int_{S^1} (a_1 x + a_2)_+ \, d\nu(a_1,a_2).
\end{equation}

Each input weight $(a_1, a_2) \in S^1$ corresponds to a feature $\psi_a(x)=  (a_1x + a_2)_+$, which is piece-wise linear with slope $a_1$ at the `active part' of the ReLU. We note that the two poles $(0, 1)$ and $(0, -1)$ correspond to the constant features $\psi_{(0,1)}(x) = 1$ and $\psi_{(0, -1)}(x) =0$. Defining $\tilde{S^1} = S^1\setminus \{(0,1), \, (0, -1)\}$, we can hence rewrite equation \eqref{eq:1dnet} as:
\begin{equation}
    F_\nu(x) = \int_{\tilde{S^1}} (a_1 x + a_2)_+ \, d\nu(a_1,a_2) \: + \: \nu\left( (0,1)\right).
\end{equation}
For the sake of simplicity, we restrict our analysis to the set of measures $\mathcal{M}(\tilde{S^1}, \mathbb{R}^k)$, which corresponds to the same set of neural networks as $\mathcal{M}(S^1, \mathbb{R}^k)$, up to a constant. We will later see through the proofs of Section \ref{section:optimal} that such a simplification is indeed permitted; for the implicit bias problems we will study, any missing constants $\nu((0,1))$ only lead to changes in the weightings of boundary features of the re-parameterized feature space.

The rough idea behind our re-parameterization is to utilise the positive-homogeneity of the ReLU to normalize the input-layer weights by the slope magnitude of their corresponding features. After re-parameterization, all features will have slopes of unit magnitude. This simplifies analysis, as the slopes of piece-wise linear segments of finite neural networks will now be controlled entirely by the network's output layer.

More formally, let $\mathbb{W}= \{-1,1\} \times \mathbb{R}$, and consider the Borel measurable function:

\begin{equation}\label{eq:measurablefunc}
\begin{array}{cccc}
   G:&\tilde{S^1}  &\longrightarrow &\mathbb{W}  \\
    &(a_1, a_2) &\longmapsto &\left(\frac{a_1}{|a_1|}, -\frac{a_2}{|a_1|}\right).
\end{array}
\end{equation}

We denote the re-parameterized input-layer weights as $(s, c) \coloneqq  G((a_1,a_2))$, and perform the following change of variable using $G$:
\begin{equation}\label{eq:changeofvar}
\begin{aligned}
    F_\nu(x) &= \int_{\tilde{S^1}} (a_1 x + a_2)_+ d\nu(a_1, a_2) \\
    &= \int_{\tilde{S^1}} \left( \frac{1}{|a_1|}(a_1x+a_2)  \right)_+ \,|a_1|d\nu(a_1, a_2) \\
    &= \int_\mathbb{W} \left(s(x-c)\right)_+ \, d\mu(s,c) \: \coloneqq  \: f_\mu(x),
    \end{aligned}
\end{equation}

where $\mu$ is the push-forward measure of $|a_1|d\nu(a_1,a_2)$ by $G$. The above change of variable defines a natural mapping:

\begin{equation}
    \begin{array}{ccc}
    T:\mathcal{M}(\tilde{S^1}, \mathbb{R}^k) &\longrightarrow&  \mathcal{M}(\mathbb{W}, \mathbb{R}^k)  \\
      \nu &\longmapsto& \mu, 
\end{array}
\end{equation}

where $\mu = T(\nu) \Longrightarrow F_\nu = f_{\mu}$. 
One can think of $c$ as the critical point or `kink' of an input weight, that is the point of discontinuity in the ReLU of the corresponding feature. $s$ can be thought of as the sign of the feature; when $s=1$ / $-1$ the feature ramps rightwards / leftwards.

\begin{figure}[t]
    \centering
    \includegraphics[width=0.80\linewidth]{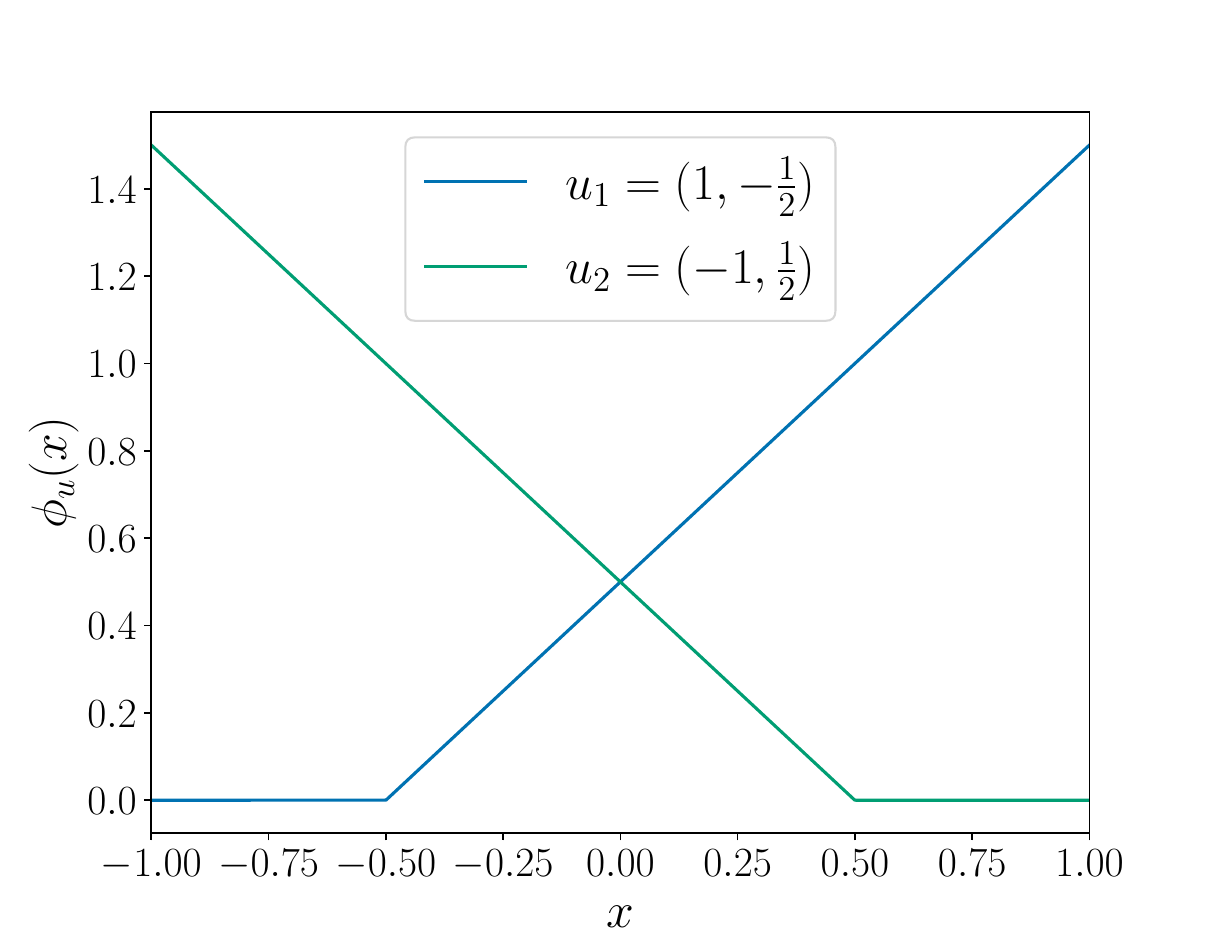}
    \caption{A depiction of two features $\phi_{u_1}$ and $\phi_{u_2}$, where $u_1=(1, -\frac{1}{2})$ is a right-ramping feature with kink at $\frac{-1}{2}$ and $u_2=(-1,\frac{1}{2})$ is a left-ramping feature with kink at $\frac{1}{2}$.}
    \label{fig:kinks}
\end{figure}

 For short-hand, we write $u=(s,c)$ and abbreviate the re-parameterized features as:
\begin{equation}\label{eq:simplefeat}
    \phi_u(x) = \left( s(x-c)\right)_+.
\end{equation}

 Figure \ref{fig:kinks} depicts an example of features in the re-parameterized space $\mathbb{W}$. Using the above notation, we can write:
 \begin{equation}\label{eq:infnn_renorm_phi}
    f_\mu( x)= \int_{\mathbb{W}} \phi_u(x) \, d\mu(u).
\end{equation}

\subsection{Simplified Implicit Biases}

Without loss of generality\footnote{Considering $\mathbb{U}$ over $\mathbb{W}$ is purely a syntactic preference in order to keep the kinks of features within the unit interval; all results and proofs generalize to $\mathbb{W}$.}, we restrict our analysis to $\mathcal{M}(\mathbb{U}, \mathbb{R}^k)$, where $\mathbb{U} = \{-1, 1\} \times [-1,1]$. Let $-1=x_1< \cdots < x_n = 1$ be ordered, real-valued data; such data can be obtained by max-min re-scaling. We define the following two problems:

\textbf{Regression:}
\begin{equation}\label{eq:implbias_reg} \begin{aligned} & \inf\limits_{\mu\in\mathcal{M}(\mathbb{U}, \mathbb{R})} &&\int_{\mathbb{U}} \:\left\lvert d\mu(u) \right\rvert \\ & \text{subject to}   &&f_\mu(x_i) = y_i \quad \forall i\in [n].\end{aligned} \end{equation}
\textbf{Classification:}
  \begin{equation}\label{eq:implbias_class} \begin{aligned} & \inf\limits_{\mu\in\mathcal{M}(\mathbb{U}, \mathbb{R}^k)} &&\int_{\mathbb{U}} \:\left\lVert d\mu(u) \right\rVert  & \\ & \text{subject to}   &&(e_{y_i} - e_l)^T f_\mu(x_i) \geq  \mathds{1}(y_i\not=l), \\ & && \forall i\in [n], \quad \forall l\in[k].\end{aligned} \end{equation}

The above problems correspond to the implicit biases of equations \eqref{eq:boursier_bias} (regression) and \eqref{eq:bach_bias} (classification). Despite the equivalence $f_{T(\nu)} =F_\nu$, problems \eqref{eq:implbias_reg} and \eqref{eq:implbias_class} are different due to the factor of $\lvert a_1 \rvert$ introduced into the total-variation when performing the change of variable. However, problems \eqref{eq:implbias_reg} and \eqref{eq:implbias_class} are easier to work with, as discussed in Section \ref{section:reparam}.

\section{OPTIMAL SUPPORTS}\label{section:optimal}

\textbf{Regression Support.}
Let $-1=x_1< \cdots < x_n = 1$ be ordered data with corresponding real labels $y_1,\ldots, y_n\in \mathbb{R}$. We define:
\begin{equation*}
    R_{reg} = \left\{x_1, x_n\right\} \cup \left\{  x_i\: : \: \frac{y_{i+1} - y_{i}}{x_{i+1}-x_{i}} \not= \frac{y_{i} - y_{i-1}}{x_{i}-x_{i-1}}   \right\} .
\end{equation*}
In words, $R_{reg}$ contains $\{x_1,x_n\}$ and any points which lie at the meeting of two line segments of the piece-wise interpolant for the data $\{(x_i,y_i)\}_{i=1}^n$. A visual example of $R_{reg}$ can be found in Figure \ref{fig:suppex}. We further define $F_{reg} = \{-1,1\} \times R_{reg}$ as the set of input weights whose features have kinks located at points appearing in $R_{reg}$.

\begin{figure}[t]
    \centering
    \includegraphics[width=0.85\linewidth]{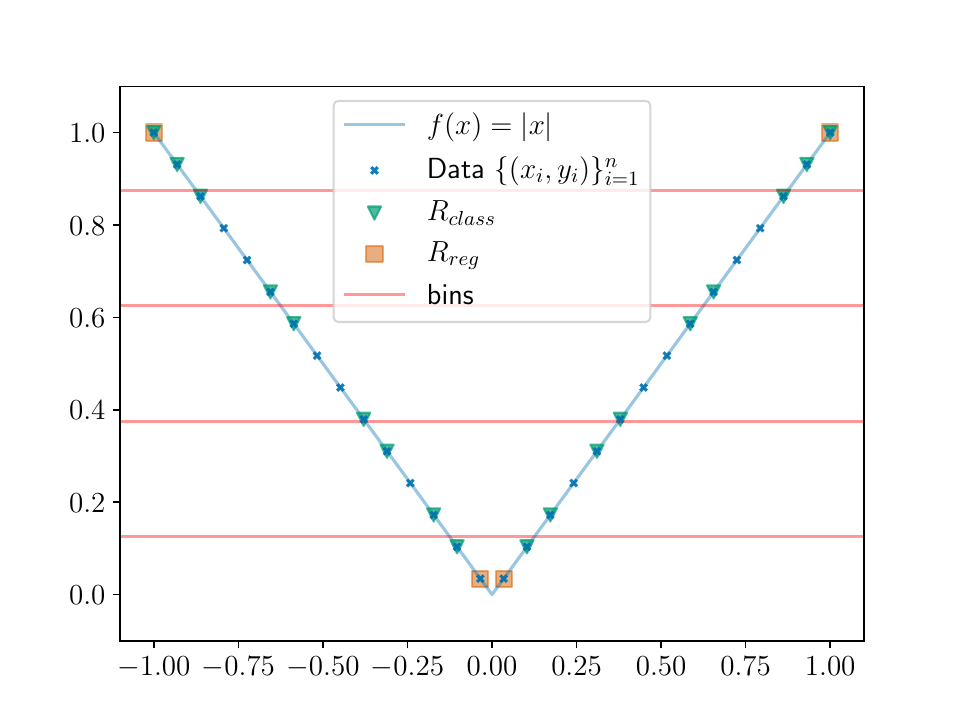}
    \caption{$R_{reg}$ and $R_{class}$ for regression data taken from the function $x\mapsto\lvert x \rvert$.}
    \label{fig:suppex}
\end{figure}

\textbf{Classification Support.}
Let $-1=x_1< \cdots < x_n = 1$ be ordered data with corresponding discrete labels $y_1,\ldots, y_n\in [k]$. We define the set $R_{class}$ as follows: \begin{equation*}
    R_{class} = \{x_1, x_n\} \cup \{ x_i : y_{i-1} \not = y_i \; \text{or} \; y_{i+1} \not= y_i\}.
\end{equation*}
In words, $R_{class}$ contains $\{x_1, x_n\}$ and all other $x_i$ which have a differing label from either of its two adjacent neighbours in the sequence $(x_i)_{i=1}^n$. An example of $R_{class}$ is depicted in Figure \ref{fig:suppex}. Similarly, we define $F_{class}=\{-1,1\}\times R_{class}$ as the set of input weights whose features have kinks located at points appearing in $R_{class}$.

We are now ready to state our main theoretical result, which shows how the implicit biases of regression \eqref{eq:implbias_reg} and classification \eqref{eq:implbias_class} differ in support.
\begin{theorem}\label{theorem:opt_supp}
For real-valued, ordered data $-1=x_1< \cdots < x_n = 1$:
\begin{enumerate}
    \item There exists $\mu \in\mathcal{M}(\mathbb{U}, \mathbb{R})$ with $\text{supp}(\mu) \subseteq F_{reg}$ which is optimal for problem \eqref{eq:implbias_reg} with labels $y_1,\ldots, y_n\in\mathbb{R}$.
    \item There exists $\nu \in\mathcal{M}(\mathbb{U}, \mathbb{R}^k)$ with $\text{supp}(\nu) \subseteq F_{class}$ which is optimal for problem \eqref{eq:implbias_class} with labels $y_1,\ldots, y_n\in[k]$.
\end{enumerate}
\end{theorem}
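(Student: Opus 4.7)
I would treat the two parts in parallel, in each case first producing a candidate measure supported on $F_{reg}$ (resp.\ $F_{class}$), and then certifying its optimality by relating $\int \lvert d\mu \rvert$ (resp.\ $\int \lVert d\mu\rVert$) to a structural quantity associated with the piecewise linear function $f_\mu$.

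\textbf{Part 1 (regression).} Split any feasible $\mu\in\mathcal{M}(\mathbb{U},\mathbb{R})$ as $\mu=\mu_++\mu_-$ according to the two values $s=\pm1$, and regard $\mu_\pm$ as finite signed measures on $[-1,1]$. A direct computation with $\phi_{(s,c)}(x)=(s(x-c))_+$ gives
\[
 f_\mu(x)=\int(x-c)_+\, d\mu_+(c)+\int(c-x)_+\, d\mu_-(c),\qquad f_\mu''=\mu_++\mu_-\ \text{(as distributions on $(-1,1)$)},
\]
so that $\int|d\mu|\ge \int d|\mu_++\mu_-|=\mathrm{TV}(f_\mu')$, together with boundary contributions controlling $f_\mu(\pm1)$. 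The first key step is to check that this inequality can be made an equality by the canonical representation where $\mu_+$ and $\mu_-$ are concentrated on the sets $\{f''>0\}$ and $\{f''<0\}$ respectively. Then, any feasible $f_\mu$ interpolates the data $(x_i,y_i)$; the mean value theorem forces $f_\mu'$ to take each secant slope $m_i=(y_{i+1}-y_i)/(x_{i+1}-x_i)$ somewhere in $(x_i,x_{i+1})$, hence
\[
 \mathrm{TV}(f_\mu')\ \ge\ \sum_{i=1}^{n-2}|m_{i+1}-m_i|\ =\ \mathrm{TV}({g^*}'),
\]
where $g^*$ is the piecewise linear interpolant. Since $g^*$ has kinks exactly at $R_{reg}$, writing it as a telescoping sum of features $\phi_{(\pm1,c)}$ with $c\in R_{reg}$ yields a feasible $\mu^\star$ with $\mathrm{supp}(\mu^\star)\subseteq F_{reg}$ that achieves the lower bound, proving optimality.

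\textbf{Part 2 (classification).} Here each coordinate $f_{\nu,j}$ of $f_\nu$ is again piecewise linear. The observation that drives the proof is that the margin constraints $(e_{y_i}-e_l)^T f_\nu(x_i)\ge \mathds{1}(y_i\ne l)$ only couple adjacent data points through the continuity of a piecewise linear function. On any maximal run $[x_p,x_q]$ of consecutive data having the same label $y$, one may force $f_\nu$ to be constant and equal to any vector of the form $\lambda e_y + v\mathbf 1$ with $\lambda\ge 1$; no interior kink is needed. Thus a natural candidate $\nu^\star$ takes a constant vector value on each label block and linearly interpolates between them, with kinks placed only at the block boundaries and at $x_1,x_n$, i.e.\ only at points of $R_{class}$. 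One then verifies $\mathrm{supp}(\nu^\star)\subseteq F_{class}$ by construction. Optimality is obtained by the convex dual: the dual of \eqref{eq:implbias_class} involves multipliers $\alpha_{i,l}\ge 0$ with dual feasibility $\sup_{u\in\mathbb{U}}\bigl\lVert \sum_{i,l}\alpha_{i,l}(e_{y_i}-e_l)\phi_u(x_i)\bigr\rVert_*\le 1$, and complementary slackness forces $\mathrm{supp}(\nu^\star)$ to lie inside the active set of this dual constraint. I would construct the dual certificate $\alpha^\star$ supported on pairs $(i,l)$ involving only boundary indices of label blocks, and check that the corresponding function of $u=(s,c)$ attains its dual-norm maximum precisely on $F_{class}$.

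\textbf{Main obstacle.} The regression half reduces cleanly to a scalar total-variation minimization via the distributional identity $f_\mu''=\mu_++\mu_-$, and the MVT lower bound is essentially tight. The genuine difficulty is in Part 2: the $\ell_1/\ell_2$ group norm in $\int\lVert d\nu\rVert$ couples coordinates, so a naive coordinatewise TV bound is loose. Showing that the active set of the dual constraint can be arranged to coincide with $F_{class}$ requires a careful choice of the dual certificate that exploits the precise structure of $\phi_u(x_i)=(s(x_i-c))_+$ as a function of $c$ (piecewise linear with kinks at the $x_i$'s), so that the dual function's maxima occur only at the endpoints $\pm1$ and at the $x_i$'s neighbouring a label change. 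I would expect the verification that no spurious maxima appear at interior $x_i$'s of a same-label block to be the subtlest calculation.
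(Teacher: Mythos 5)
There is a genuine gap in both halves of your plan. For Part 1, the chain of inequalities does not close as stated. Your lower bound $\int\lvert d\mu\rvert \ge \mathrm{TV}(f_\mu')\ge\sum_i\lvert m_{i+1}-m_i\rvert$ only accounts for the interior curvature of $f_\mu$, while any measure realizing the piecewise linear interpolant must in addition pay for its affine component through features with kinks at $\pm1$ (e.g.\ the representation $\tfrac{y_1}{2}\delta_{(-1,1)}+(\tfrac{y_1}{2}+\gamma_1)\delta_{(1,-1)}+\sum_l(\gamma_{l+1}-\gamma_l)\delta_{(1,x_{j_l})}$ has cost strictly larger than $\sum_i\lvert m_{i+1}-m_i\rvert$ whenever the boundary weights are nonzero). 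So your candidate does not attain the lower bound you prove, and matching upper and lower bounds requires a sharper inequality that includes the boundary terms. This is not mere bookkeeping: since $(c-x)_+=(x-c)_+-(x-c)$, mass on left-ramping and right-ramping features can cancel in $f_\mu''$ while shifting the affine part, so there is a real trade-off between interior mass and boundary mass, and one must rule out that a feasible measure exploits this cancellation to beat the interpolant. In effect you would need a dual certificate (or an exact expression for the minimal representation cost of a piecewise affine function on $[-1,1]$), which your sketch does not supply. The same caveat applies to invoking the mean value theorem for $f_\mu$, which is only differentiable almost everywhere, though that is a minor repair.

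For Part 2 the proposal is a plan rather than a proof: the construction of the dual certificate $\alpha^\star$ and the verification that the dual-norm constraint has no spurious active points inside a same-label block are precisely the hard steps, and they are left open. It is also not clear that your ``constant on each label block'' candidate is optimal for the $\ell_1/\ell_2$-penalized problem (the theorem only guarantees some optimum whose coordinates are affine between consecutive points of $R_{class}$), so the object you propose to certify may not admit a certificate at all. Note that the paper sidesteps both difficulties: it never identifies an optimal solution or an explicit dual vector. Instead it first proves (Lemma \ref{lemma:sufficient_dualfeasible} plus strong duality for the support-restricted problem, giving Proposition \ref{prop:restrict_data}) that kinks may be restricted to the data points, then applies this to the \emph{reduced} problem whose constraints are only at $R_{reg}$ (resp.\ $R_{class}$), and finally observes that the resulting optimum, being affine between consecutive points of $R_{reg}$ (resp.\ $R_{class}$), automatically satisfies the dropped constraints of \eqref{eq:implbias_reg} (resp.\ \eqref{eq:implbias_class}) and has value no larger than the full optimum, hence is optimal; the dual variables are simply extended by zero. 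If you want to salvage your constructive route, you would at minimum need the analogue of Lemma \ref{lemma:sufficient_dualfeasible} (piecewise affinity of $c\mapsto\sum_i\alpha_i\phi_{(s,c)}(x_i)$ plus convexity of $\Omega_*$) to reduce the dual check to finitely many points, and then an explicit certificate — which is exactly the part still missing.
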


\textbf{Remark:} The optimal support $R_{reg}$ depends completely on the data set $\{(x_i,y_i)\}_{i=1}^n$, whilst $R_{class}$ depends both on the data and the number of bins $k$ used for discretization. In general, by increasing $k$, one can increase the size of the $R_{class}$\footnote{Excluding trivial problems, for example, when the target regression function is very close to being constant.}. This additional dependence on $k$ gives $R_{class}$ the potential to include more points than $R_{reg}$. It is not hard to think of simple regression problems for which $R_{reg}$ is sparse amongst $\{x_1,\ldots, x_n\}$, but where $R_{class}$ is not (for a suitable choice of $k$). We will explore this idea further in Section \ref{section:mst}, and its relationship to the binning phenomenon.

\subsection{An Outline for the Proof of Theorem \ref{theorem:opt_supp}}
\begin{enumerate}
    \item We begin in Section \ref{section:generalprob} by introducing a general optimization problem \eqref{eq:general_mintv} which encompasses both problems \eqref{eq:implbias_reg} and \eqref{eq:implbias_class}. We derive the dual of this problem in Lemma \ref{lemma:dualgenprob}.
    \item Let $U_X = \{-1,1\} \times \{x_1, \ldots, x_n\}$ be the set of features having kinks at position of the data. We aim to show there exists optimal measures for problems \eqref{eq:implbias_reg} and \eqref{eq:implbias_class}, whose supports are subsets of $U_X$. The proof of this Proposition is broken into smaller results:\begin{enumerate}
        \item In Lemma \ref{lemma:sufficient_dualfeasible} we derive a sufficient condition for dual feasibility to  problem \eqref{eq:general_mintv}.
        \item We show Corollary \ref{corollary:feasilble_implies_optimal}, which states that the existence of a feasible measure with support in $U_X$ is a sufficient condition for the existence of an optimal measure for problem \eqref{eq:general_mintv}, having support in $U_X$. 
        \item We construct feasible measures for both both problems \eqref{eq:implbias_reg} and \eqref{eq:implbias_class} in order to apply Corollary \ref{corollary:feasilble_implies_optimal} and conclude the proof of Proposition~\ref{prop:restrict_data}.
    \end{enumerate}
\item  We apply Proposition \ref{prop:restrict_data} to problems \eqref{eq:implbias_reg} and \eqref{eq:implbias_class}, but for data sets consisting only of points in $R_{reg}$ and $R_{class}$. We then extend these solutions to the complete train data set $\{x_1, \ldots, x_n\}$, and show that strong duality is indeed attained, which concludes the proof of Theorem~\ref{theorem:opt_supp}.
\end{enumerate}

\subsection{A Generalized Implicit Bias Problem}\label{section:generalprob}

Let $\Omega$ be any norm on $\mathbb{R}^k$. For a family of non-empty closed convex sets $S_1, \ldots, S_n \subseteq \mathbb{R}^{k}$ we define the following optimization problem:
  \begin{equation}\label{eq:general_mintv} \inf\limits_{\mu\in\mathcal{M}(\mathbb{U}, \mathbb{R}^k)}\int_\mathbb{U} \Omega\left( d\mu(u) \right) + \sum_{i=1}^n I_{S_i}(f_\mu(x_i)). \end{equation}

By setting $\Omega$ to be the Euclidean norm on $\mathbb{R}^k$ and choosing $k$ and $S_i$, one can recover both problems \eqref{eq:implbias_reg} and  \eqref{eq:implbias_class}. More precisely, by setting $k=1$ and $S_i=\{y_i\}$, we obtain problem \eqref{eq:implbias_reg}. On the other hand, taking $k>1$ and 
\begin{align*}
    S_i = \{ v\in \mathbb{R}^k &: (e_{y_i} - e_l)^Tv\geq \mathds{1}(y_i\not=l) \\ & \forall i\in [n], \quad \forall l\in [k] \quad  \},
\end{align*}
we recover problem \eqref{eq:implbias_class} where the data have discrete labels $y_1,\ldots, y_n \in [k]$.

\begin{lemma}\label{lemma:dualgenprob}

The dual of problem \eqref{eq:general_mintv} is:
  \begin{equation}\label{eq:general_dual} \begin{aligned} & \underset{\alpha_1\ldots,\alpha_n \in \mathbb{R}^{k}}{\sup} & &
    -\sum_{i=1}^n \sigma_{S_i}(\alpha_i)\\ & \text{subject to} & & \Omega_*\left(
    \sum_{i=1}^n \alpha_i \phi_u(x_i) \right)\leq
1 \quad \forall u\in \mathbb{U}. \end{aligned} \end{equation}
\end{lemma}
\begin{proof}
   The full proof is given in Appendix \ref{appendix:derive_dual}. We provide a brief outline. By Fenchel duality \citep{bidual}, we have:
\begin{equation*}
    \sum_{i=1}^n I_{S_i}(f_\mu(x_i)) = \sum_{i=1}^n \sup\limits_{\alpha_i\in\mathbb{R}^k} \left\{ \left\langle \alpha_i, f_\mu(x_i) \right\rangle - \sigma_{S_i}(\alpha_i) \right\}, \end{equation*}
The dual problem can be obtained by substituting this into problem \eqref{eq:general_mintv} and exchanging the order of the supremum and infinum. In order to resolve the infinum, we use properties of the dual norm.
\end{proof}

\subsubsection{Restricting the Position of Kinks to the Data}

Let $U_X= \{-1,1\} \times \{x_1, \ldots, x_n\}$ denote the set of input weights whose features have kinks at $\{x_1,\ldots, x_n\}$.

\begin{proposition}\label{prop:restrict_data}
For real-valued, ordered data $-1=x_1< \cdots < x_n = 1$:
\begin{enumerate}
    \item There exists $\mu \in\mathcal{M}(U_X, \mathbb{R})$ which is optimal for problem \eqref{eq:implbias_reg} with labels $y_1,\ldots, y_n\in\mathbb{R}$.
    \item There exists $\nu \in\mathcal{M}(U_X, \mathbb{R}^k)$ which is optimal for problem \eqref{eq:implbias_class} with labels $y_1,\ldots, y_n\in[k]$.
\end{enumerate}
\end{proposition}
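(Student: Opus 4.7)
The plan is to construct a discretization map $T : \mathcal{M}(\mathbb{U}, \mathbb{R}^k) \to \mathcal{M}(U_X, \mathbb{R}^k)$ that preserves the values $f_\mu(x_1), \ldots, f_\mu(x_n)$ and does not increase the objective $\int_\mathbb{U} \Omega(d\mu)$, and then to invoke existence of a minimizer in the resulting finite-dimensional restricted problem. The key observation is that, for each fixed $x_j$, the map $c \mapsto \phi_{(s,c)}(x_j) = (s(x_j - c))_+$ is continuous and piecewise linear in $c \in [-1,1]$ with a single possible kink at $c = x_j$. Since every such kink lies in the grid $\{x_1,\ldots,x_n\}$, this map coincides with its own piecewise-linear interpolant on the grid. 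Writing $\lambda_1,\ldots,\lambda_n : [-1,1]\to[0,1]$ for the corresponding hat functions (nonnegative, with $\sum_i \lambda_i \equiv 1$), this yields the identity
\begin{equation*}
\phi_{(s,c)}(x_j) \;=\; \sum_{i=1}^n \lambda_i(c)\, \phi_{(s, x_i)}(x_j) \qquad \forall j \in [n],\ (s,c) \in \mathbb{U}.
\end{equation*}

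I would then define
\begin{equation*}
T(\mu) \;=\; \sum_{s \in \{-1,1\}} \sum_{i=1}^n \Big( \int_{[-1,1]} \lambda_i(c)\, d\mu(s,c) \Big)\, \delta_{(s,x_i)} \;\in\; \mathcal{M}(U_X, \mathbb{R}^k).
\end{equation*}
Substituting the identity above into $f_{T(\mu)}(x_j)$ and exchanging sum and integral gives $f_{T(\mu)}(x_j) = f_\mu(x_j)$ for every $j$, so $T(\mu)$ satisfies the same constraints as $\mu$ in problems \eqref{eq:implbias_reg} and \eqref{eq:implbias_class}. For the objective, using the polar decomposition $d\mu = g\, d|\mu|$ together with a Jensen-type estimate valid for an arbitrary norm $\Omega$, each atom of $T(\mu)$ obeys $\Omega\big(\int \lambda_i(c)\, d\mu(s,c)\big) \leq \int \lambda_i(c)\, \Omega(g)\, d|\mu|$; summing over $s$ and $i$ and using $\sum_i \lambda_i \equiv 1$ yields $\int_\mathbb{U} \Omega(dT(\mu)) \leq \int_\mathbb{U} \Omega(d\mu)$.

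These two properties imply that the infimum over $\mathcal{M}(\mathbb{U},\cdot)$ equals the infimum over $\mathcal{M}(U_X,\cdot)$ for both \eqref{eq:implbias_reg} and \eqref{eq:implbias_class}. The restricted problem is a finite-dimensional convex program (its parameters are $2n$ vectors in $\mathbb{R}^k$, one per element of $U_X$) with a nonempty feasible set, since piecewise-linear interpolants through the data are realisable on $U_X$, and with a coercive norm objective; hence a minimizer exists in $\mathcal{M}(U_X, \mathbb{R}^k)$, and by the preceding inequality it is globally optimal for the original problem. The main subtlety is the vector-valued total-variation estimate, which has to be stated carefully for an arbitrary norm $\Omega$ via the polar decomposition and the convexity of $\Omega$; the piecewise-linearity identity and the finite-dimensional existence step are routine.
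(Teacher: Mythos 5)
Your proof is correct, but it takes a genuinely different route from the paper. You argue entirely in the primal: the identity $\phi_{(s,c)}(x_j)=\sum_i \lambda_i(c)\,\phi_{(s,x_i)}(x_j)$ (valid because $c\mapsto\phi_{(s,c)}(x_j)$ is affine on each grid interval, its only kink being at $c=x_j$) lets you define a discretization map $T$ that preserves the values $f_\mu(x_j)$ exactly and, by the sublinearity of $\Omega$ applied through the polar decomposition, does not increase the total-variation objective; equality of the two infima and existence of a minimizer for the finite-dimensional restricted problem then give the result. The paper instead works through duality: its Lemma~\ref{lemma:sufficient_dualfeasible} shows that a dual certificate need only satisfy the constraint $\Omega_*\big(\sum_i\alpha_i\phi_u(x_i)\big)\le 1$ at $u\in U_X$ (the dual analogue of your piecewise-affinity observation, using convexity of $\Omega_*$), and Corollary~\ref{corollary:feasilble_implies_optimal} combines this with strong duality of the restricted problem and the constructive feasibility Lemma~\ref{lemma:exist_feasible} to conclude $P_1=P_2$. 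Both proofs hinge on the same structural fact about piecewise linearity on the grid, one in the primal and one in the dual. Your version is arguably more elementary (no dual problem, no strong-duality appeal) and even shows the slightly stronger statement that restriction to $U_X$ is lossless for any feasible measure, for the general problem \eqref{eq:general_mintv} with arbitrary convex sets $S_i$. What the paper's route buys in exchange is the dual optimal variables $\alpha^*$, which are reused in the proof of Theorem~\ref{theorem:opt_supp} (they are extended by zeros to certify optimality on the full data set); your argument does not produce these certificates, so it would not substitute for that later step. Also note that your one-line assertion that piecewise-linear interpolants are realisable on $U_X$ is exactly the content of the paper's constructive Appendix~\ref{appendix:constructive}; it is indeed routine in one dimension (features at $x_1$ and $x_n$ span the affine functions, and interior kinks adjust slopes), but a complete write-up should include that construction, coordinate-wise against the one-hot labels in the classification case.
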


To prove Proposition \ref{prop:restrict_data}, we show a series of lemmas that combine to give the desired result.

\begin{lemma}\label{lemma:sufficient_dualfeasible}
 Suppose $\alpha_1,\ldots,\alpha_n\in\mathbb{R}^k$ satisfy:
    \begin{equation*}
        \Omega_*\left(
    \sum_{i=1}^n \alpha_i \phi_u(x_i) \right)\leq 
1 \quad \forall u\in U_X.
    \end{equation*}
    Then $\alpha_1\ldots,\alpha_n$ are feasible for problem \eqref{eq:general_dual}.
\end{lemma}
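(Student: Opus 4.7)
The plan is to exploit the fact that, for fixed sign $s \in \{-1, 1\}$, the map $c \mapsto \phi_{(s,c)}(x_i) = (s(x_i - c))_+$ is piecewise affine in $c$, with a single breakpoint at $c = x_i$. Therefore the vector-valued map
\[
g_s(c) \;:=\; \sum_{i=1}^n \alpha_i \, \phi_{(s,c)}(x_i)
\]
is piecewise affine in $c$ on $[-1,1]$, and its breakpoints are contained in $\{x_1, \ldots, x_n\}$. On each subinterval $[x_i, x_{i+1}]$, the function $g_s$ is affine in $c$, and composing with the norm $\Omega_*$ (which is convex) gives that $c \mapsto \Omega_*(g_s(c))$ is convex on this subinterval.

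Since a convex function on a compact interval attains its maximum at one of the two endpoints, we get
\[
\sup_{c \in [x_i, x_{i+1}]} \Omega_*(g_s(c)) \;=\; \max\bigl(\Omega_*(g_s(x_i)),\, \Omega_*(g_s(x_{i+1}))\bigr),
\]
and iterating over $i = 1, \ldots, n-1$ together with $s \in \{-1, 1\}$ yields
\[
\sup_{u \in \mathbb{U}} \Omega_*\!\Bigl(\textstyle\sum_{i=1}^n \alpha_i \phi_u(x_i)\Bigr) \;=\; \max_{u \in U_X} \Omega_*\!\Bigl(\textstyle\sum_{i=1}^n \alpha_i \phi_u(x_i)\Bigr).
\]
Recall that $x_1 = -1$ and $x_n = 1$, so the kinks $c = \pm 1$ on the boundary of $[-1,1]$ are already included in $U_X$; no extra boundary argument is needed.

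The conclusion of the lemma then follows immediately: if the right-hand side is bounded by $1$ (the hypothesis), so is the left-hand side, which is precisely the feasibility condition of problem \eqref{eq:general_dual}. The only step that requires mild care is verifying that $c \mapsto g_s(c)$ is genuinely affine (not merely continuous) on each open interval $(x_i, x_{i+1})$ so that convexity of $\Omega_* \circ g_s$ is valid; this is clear because on such an interval each summand $(s(x_i - c))_+$ is either identically $s(x_i - c)$ or identically $0$, depending only on the sign of $x_i - c$, which is constant there.
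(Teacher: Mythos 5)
Your proof is correct and follows essentially the same route as the paper's: both exploit that $c \mapsto \sum_i \alpha_i \phi_{(s,c)}(x_i)$ is piecewise affine with breakpoints among the data points $x_1,\ldots,x_n$, then use convexity of $\Omega_*$ to conclude the constraint on each subinterval $[x_i,x_{i+1}]$ is controlled by its values at the endpoints (the paper writes this as a convex combination $g(c)=\theta g(x_i)+(1-\theta)g(x_{i+1})$, you phrase it as a convex function attaining its maximum at an endpoint). Your explicit remark that $x_1=-1$ and $x_n=1$ cover the boundary of $[-1,1]$ is a nice touch of care that the paper leaves implicit.
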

\begin{proof}
The full proof is detailed in Appendix \ref{appendix:dual_Ux}, and relies on the convexity of $\Omega_*$ combined with the fact that $ \sum_{i=1}^n \alpha_i \phi_u(x_i)$ is piece-wise affine in $c$ for both left and right-wards ramping features.
\end{proof}

\begin{corollary}\label{corollary:feasilble_implies_optimal}
Suppose there exists $\mu$ feasible for the following problem: \begin{equation}\label{eq:mintv_restrictdata} \inf\limits_{\mu\in\mathcal{M}(U_X, \mathbb{R}^k) }\int_\mathbb{U} \Omega\left( d\mu(u) \right) + \sum_{i=1}^n I_{S_i}\left(f_\mu(x_i)\right).\end{equation} Then there exists $\mu^* \in \mathcal{M}(U_X, \mathbb{R}^k)$ which is optimal for problem \eqref{eq:general_mintv}.
\end{corollary}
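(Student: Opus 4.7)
The plan is to relate the primal and dual values of the restricted problem \eqref{eq:mintv_restrictdata} to those of the general problem \eqref{eq:general_mintv} via the duality already derived, and show that the restricted problem admits a minimizer which continues to be optimal for the general one. Let me denote by $P_{\mathrm{gen}}, P_{\mathrm{res}}$ the infima of \eqref{eq:general_mintv} and \eqref{eq:mintv_restrictdata} respectively, and by $D_{\mathrm{gen}}, D_{\mathrm{res}}$ the corresponding dual suprema (the dual of \eqref{eq:mintv_restrictdata} being derived by repeating the Fenchel argument of the preceding lemma, but now yielding the constraint $\Omega_*\bigl(\sum_{i=1}^n \alpha_i\phi_u(x_i)\bigr)\le 1$ only for $u\in U_X$ rather than all $u\in\mathbb{U}$).

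First I would observe that any $\mu\in\mathcal{M}(U_X,\mathbb{R}^k)$ extends by zero to an element of $\mathcal{M}(\mathbb{U},\mathbb{R}^k)$ with the same objective value and the same values of $f_\mu(x_i)$, which immediately gives the inequality $P_{\mathrm{gen}}\le P_{\mathrm{res}}$. Next I would invoke Lemma \ref{lemma:sufficient_dualfeasible}: the dual constraint for $u\in U_X$ already forces the dual constraint for every $u\in\mathbb{U}$, so the feasible sets of \eqref{eq:general_dual} and of the dual of \eqref{eq:mintv_restrictdata} coincide, giving $D_{\mathrm{gen}}=D_{\mathrm{res}}$. Weak Fenchel duality applied to \eqref{eq:general_mintv} yields $P_{\mathrm{gen}}\ge D_{\mathrm{gen}}$.

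For the restricted problem, the domain $\mathcal{M}(U_X,\mathbb{R}^k)$ is finite-dimensional (isomorphic to $\mathbb{R}^{2nk}$, since $|U_X|=2n$), and the objective in \eqref{eq:mintv_restrictdata} is the sum of a proper convex continuous norm term and indicator functions of nonempty closed convex sets pulled back by the linear map $\mu\mapsto(f_\mu(x_i))_i$. Under the hypothesis that this problem is feasible, the objective is a proper lower semicontinuous convex function bounded below, and strong Fenchel--Rockafellar duality in finite dimensions (using that the norm term is everywhere finite, which provides the required qualification between the two convex functions making up the objective) gives $P_{\mathrm{res}}=D_{\mathrm{res}}$; moreover coercivity of the norm, together with feasibility and boundedness below, guarantees attainment of $P_{\mathrm{res}}$ by some $\mu^*\in\mathcal{M}(U_X,\mathbb{R}^k)$.

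Chaining everything produces $P_{\mathrm{res}}=D_{\mathrm{res}}=D_{\mathrm{gen}}\le P_{\mathrm{gen}}\le P_{\mathrm{res}}$, so all four quantities coincide. The minimizer $\mu^*$ from the preceding paragraph, viewed as an element of $\mathcal{M}(\mathbb{U},\mathbb{R}^k)$ by extension by zero, is feasible for \eqref{eq:general_mintv} with objective value $P_{\mathrm{res}}=P_{\mathrm{gen}}$, and is therefore optimal, which is what the corollary claims. The one delicate point — the main obstacle — is the invocation of strong duality for the restricted problem; I expect that the cleanest route is the standard finite-dimensional Fenchel--Rockafellar theorem applied to the decomposition into the norm term (finite everywhere, hence satisfying the relative interior qualification trivially) and the sum of indicator composites, rather than attempting a direct infinite-dimensional argument on \eqref{eq:general_mintv}.
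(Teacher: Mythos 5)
Your proposal is correct and follows essentially the same route as the paper's proof: extend measures on $U_X$ to get $P_{\mathrm{gen}}\le P_{\mathrm{res}}$, use Lemma \ref{lemma:sufficient_dualfeasible} to identify the dual values, invoke strong duality for the restricted problem, and chain the inequalities so that the restricted minimizer is optimal for \eqref{eq:general_mintv}. The only difference is cosmetic: the paper justifies strong duality of \eqref{eq:mintv_restrictdata} by citing standard convex duality for a feasible norm-minimization problem, whereas you spell out the finite-dimensional Fenchel--Rockafellar qualification and the coercivity argument for primal attainment, which is a slightly more careful rendering of the same step.
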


\begin{proof}
    Let $P_1$, $D_1$ denote respectively the primal and dual values for problem \eqref{eq:general_mintv}. Similarly let $P_2$, $D_2$ denote the primal and dual values for problem \eqref{eq:mintv_restrictdata}. Since $\mathcal{M}(U_X, \mathbb{R}^k) \subset \mathcal{M}(\mathbb{U}, \mathbb{R}^k)$, it follows that $P_2 \geq P_1$ and $D_2 \geq D_1$.  
    
    Problem \eqref{eq:mintv_restrictdata} is a norm minimization problem with convex constraints which has a feasible point $\mu$, so it attains strong duality \citep[Chapter 5]{boyd2004convex}. Let $(\mu^*, \alpha^*)$  denote any primal-dual pair which attains strong duality. By Lemma \ref{lemma:sufficient_dualfeasible}, $\alpha^*$ is also dual-feasible for problem \eqref{eq:general_dual} so $D_2 = D_1$. We conclude that: \begin{equation*}
        P_2 \geq P_1 \geq D_1 = D_2 = P_2 \quad \Longrightarrow P_2 = P_1,
    \end{equation*}
    so $(\mu^*, \alpha^*)$ are optimal for problem \eqref{eq:general_mintv}.
 \end{proof}

\begin{lemma}\label{lemma:exist_feasible}
For real-valued, ordered data $-1=x_1< \cdots < x_n = 1$:
\begin{enumerate}
    \item There exists $\mu \in\mathcal{M}(U_X, \mathbb{R})$ which is feasible for problem \eqref{eq:implbias_reg} with labels $y_1,\ldots, y_n\in\mathbb{R}$.
    \item There exists $\nu \in\mathcal{M}(U_X, \mathbb{R}^k)$ which is feasible for problem \eqref{eq:implbias_class} with labels $y_1,\ldots, y_n\in[k]$.
\end{enumerate}
\end{lemma}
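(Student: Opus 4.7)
The plan is to exhibit an explicit atomic measure on $U_X$ in each case, using the fact that the features $\{\phi_u : u \in U_X\}$ are expressive enough to represent any continuous piecewise-linear (CPWL) function on $[-1,1]$ with kinks at $\{x_1,\dots,x_n\}$.

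For part 1 (regression), I would directly construct the measure as a finite signed sum of Dirac masses. The key identities on $[-1,1]$ are $\phi_{(1,-1)}(x)=x+1$ and $\phi_{(-1,1)}(x)=1-x$, so that constants and linear terms are available via combinations of the two boundary atoms $\delta_{(1,-1)}$ and $\delta_{(-1,1)}$. Writing the piecewise linear interpolant $g$ of the data with slopes $s_i=(y_{i+1}-y_i)/(x_{i+1}-x_i)$ and using the standard telescoping identity $g(x)=y_1+s_1(x-x_1)+\sum_{i=2}^{n-1}(s_i-s_{i-1})(x-x_i)_+$ valid on $[x_1,x_n]$, I would collect terms to obtain
\begin{equation*}
    \mu = \Bigl(\tfrac{y_1}{2}+s_1\Bigr)\,\delta_{(1,-1)} + \tfrac{y_1}{2}\,\delta_{(-1,1)} + \sum_{i=2}^{n-1}(s_i-s_{i-1})\,\delta_{(1,x_i)}.
\end{equation*}
By construction $\mu\in\mathcal{M}(U_X,\mathbb{R})$ and $f_\mu(x_j)=g(x_j)=y_j$ for every $j\in[n]$, which gives feasibility for \eqref{eq:implbias_reg}.

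For part 2 (classification), I would reduce component-wise to part 1. For each class index $j\in[k]$, apply the construction of part 1 to the scalar real-valued data $\{(x_i,\mathds{1}(y_i=j))\}_{i=1}^n$ to obtain a measure $\mu_j\in\mathcal{M}(U_X,\mathbb{R})$ with $f_{\mu_j}(x_i)=\mathds{1}(y_i=j)$. Assembling these into the vector-valued measure $\nu=(\mu_1,\dots,\mu_k)\in\mathcal{M}(U_X,\mathbb{R}^k)$ gives $f_\nu(x_i)=e_{y_i}$, so that for every $i\in[n]$ and $l\in[k]$,
\begin{equation*}
    (e_{y_i}-e_l)^T f_\nu(x_i) = 1-\mathds{1}(y_i=l) = \mathds{1}(y_i\neq l),
\end{equation*}
which meets the margin constraints of \eqref{eq:implbias_class} with equality, hence $\nu$ is feasible.

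There is no real obstacle here; the whole proof is just a bookkeeping argument in the CPWL basis, made possible by the observation that the two boundary atoms $(1,-1)$ and $(-1,1)$ generate the affine functions on $[-1,1]$ while the interior right-ramping atoms $(1,x_i)$ introduce kinks exactly at the data sites. The only mild point to check is that the coordinate-wise reduction in part 2 preserves membership in $\mathcal{M}(U_X,\mathbb{R}^k)$, which is immediate since each $\mu_j$ is already supported on $U_X$.
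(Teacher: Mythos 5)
Your construction is correct and follows essentially the same route as the paper's own proof: both place boundary atoms at $(1,-1)$ and $(-1,1)$ with weights $\tfrac{y_1}{2}+s_1$ and $\tfrac{y_1}{2}$ plus right-ramping atoms weighted by the slope changes so that $f_\mu$ equals the piecewise-linear interpolant, and both handle classification by the identical one-hot, component-wise reduction to the regression case. The only cosmetic difference is that the paper verifies the interpolation by induction over the line segments and places atoms only at the actual kink points $R_{reg}$, whereas you invoke the telescoping identity directly and allow (possibly zero-weight) atoms at all interior data points, which changes nothing for feasibility.
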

\begin{proof}
    The proof is constructive and detailed in Appendix~\ref{appendix:constructive}.
\end{proof}

\textbf{Proof of Proposition \ref{prop:restrict_data}:} Follows directly from combining \eqref{lemma:exist_feasible} and Corollary \eqref{corollary:feasilble_implies_optimal}.

\textbf{Proof of Theorem \ref{theorem:opt_supp}:} The proof is detailed in Appendix~\ref{appendix:theoremproof}. We will briefly provide an outline. Consider problem \eqref{eq:implbias_reg} but for a new data set $\{(x_i,y_i)\}_{i\in R_{reg}}$. By Proposition \ref{prop:restrict_data}, there exists a primal-dual optimal pair $(\mu^*, \alpha^*)$ with $\text{supp}(\mu^*) \subseteq F_{reg}$ which solves problem \eqref{eq:implbias_reg}. We remark that $\,\mu^*$ is feasible for problem \eqref{eq:implbias_reg} with the full data set $\{(x_i,y_i)\}_{i=1}^n$. It remains to show  $\exists\alpha \in\mathbb{R}^n$ which is feasible and attains strong duality with $\mu^*$. For this, we extend $\alpha^*\in\mathbb{R}^m$ to $\tilde{\alpha}\in\mathbb{R}^n$ by appending zeroes to any new entries, where $m = \left\lvert R_{reg} \right\rvert$. It is clear that $\tilde{\alpha}$ is dual-feasible, and by verifying that the pair $(\mu^*,\tilde{\alpha})$ attains strong duality we conclude. A similar reasoning applies to classification.

\section{SYNTHETIC REGRESSION TASK}\label{section:mst}


We present a simple one-dimensional toy regression task, illustrating how the optimal supports of Theorem \ref{theorem:opt_supp} can induce the binning phenomenon. The regression data set is generated from a finite teacher neural network~$\mu_T$, which has $9$ neurons in the hidden layer. The resulting target function $f_{\mu_{T}}:[-1,1]\rightarrow [0,1]$ is the sum of two large-scale triangles and two small-scale triangles, depicted in Figure \ref{fig:mstriangle}. As usual with supervised learning problems, the task is to fit a model's parameters using a train set to obtain minimum square error on a separate validation set.

\begin{figure}[ht]
\centering
\includegraphics[width=0.4\textwidth]{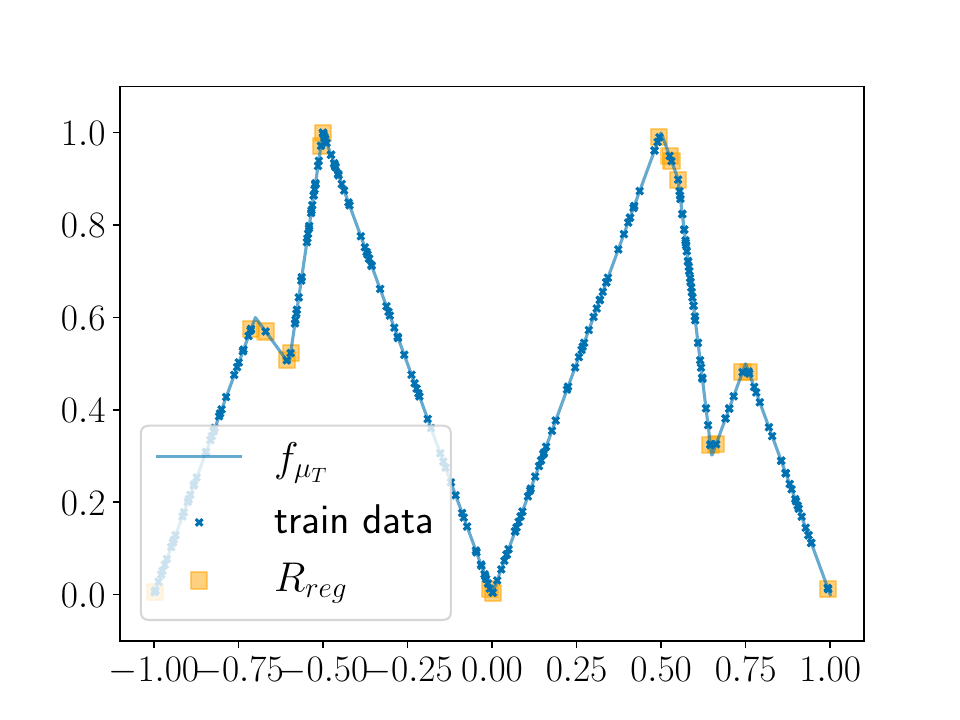}
\caption{The train data set, consisting of $250$ data points $\{(x_i,f_{\mu_T}(x_i))\}_{i=1}^{250}$, is depicted by the blue crosses. $R_{reg}$ is depicted in orange, and is notably sparse, consisting of just 18 points. On the other-hand, discretizing the data with $k=50$ bins results in the set $R_{class}$ containing $230$ of the data points.}
\label{fig:mstriangle}
\end{figure}
\subsection{Experiment Setup}\label{subsection:experiment_setup}

\textbf{Data:}
To generate discrete labels, we divided the $y$-axis into $k=50$ bins of uniform size, so that the midpoint of the first/last bin was 0/1. For both the train and validation sets, we sampled $x_i$ uniformly so that each bin contained the same number of points $\left(x_i, f_{\mu_T}(x_i)\right)$. The train and validation data sets both consisted of $250$ data points.

\textbf{Models:}
We trained two over-parameterized models: \begin{enumerate}
    \item \textbf{Regression Model:} $10,000$ neurons in the hidden layer with scalar output, totalling $30,000$ weights. Trained using the square loss.\ 
    \item \textbf{Classification Model:} $500$ neurons in the hidden layer with vector output of dimension $k=50$, totalling $26,000$ weights. Trained using the cross-entropy loss.
\end{enumerate}
\textbf{Training:}
Both models were trained using gradient descent for thirty random initializations of their weights, following the scheme given by \citet{initializing}. A hyper-parameter sweep was used to find the optimal learning rate for each of the models. The stopping criterion was when neither the train nor validation losses decreased from their best observed values over a duration of 1000 epochs. The final model parameters were taken from the epoch that obtained lowest square validation error. To obtain real-valued predictions from the classification model, we took the expected value over the bins as described in Section~\ref{section:binning}.

\subsection{Results}\label{subsection:results_mst}

\begin{table}[t]
\vspace{0.5em}
\begin{center}
\begin{tabular}{|l|cccc|}
\hline
\multicolumn{1}{|c|}{}  & \multicolumn{4}{c|}{\textbf{RMSE} $\times 10^{2}$}      \\ \cline{2-5} 
\multicolumn{1}{|c|}{}  & \textbf{Best} & \textbf{Worst}  & \textbf{Mean} & \textbf{Std. Dev} \\ \hline
Regression     & 3.70  & 6.85   & 4.55     & 1.38       \\ 
Classification & 0.86 & 1.54    & 1.21     & 0.19  \\ \hline
\end{tabular}
\end{center}
\caption{Population statistics for the RMSE over 30 random initializations of model weights.}
\label{table:30runs}
\end{table}

\begin{figure}[t]
    \centering
    \includegraphics[width=0.8\linewidth]{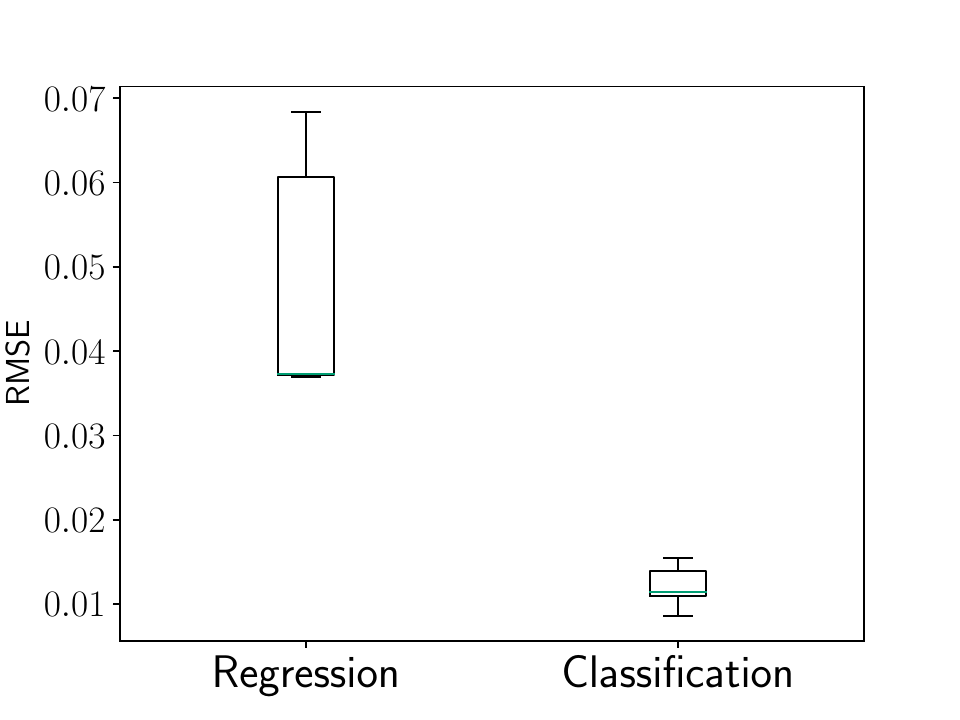}
    \caption{Population RMSE over 30 random initializations.}
    \label{fig:rmse}
\end{figure}

\begin{figure*}[t]
     \centering
     \begin{subfigure}[h]{0.24\textwidth}
         \centering
         \includegraphics[width=\textwidth]{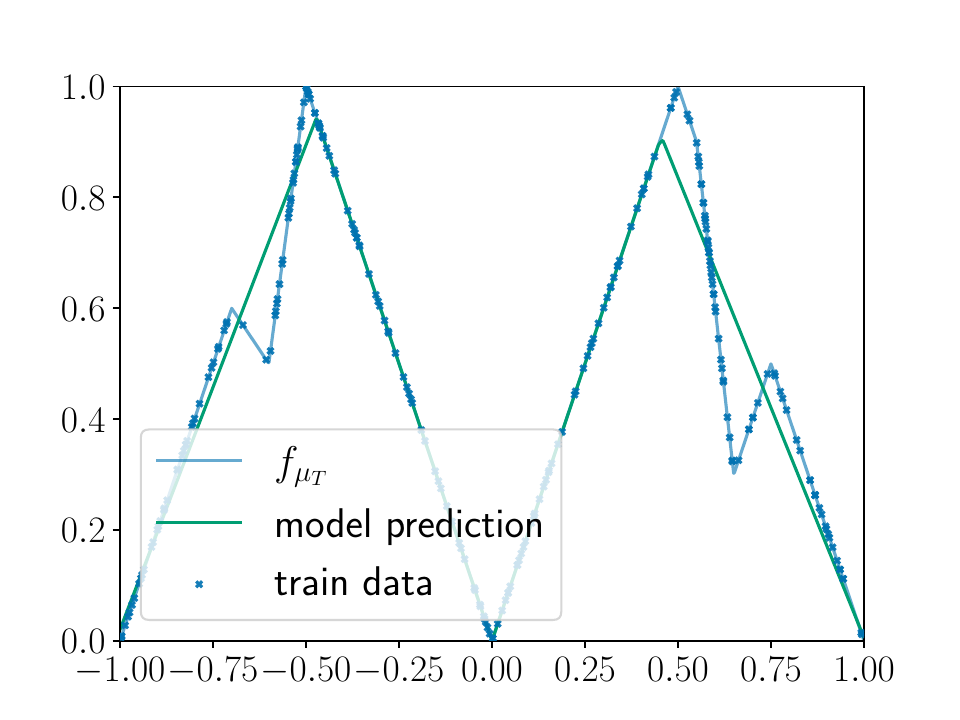}
         \caption{}
         \label{subfig:mst_reg_preds}
     \end{subfigure}
     \hfill
     \begin{subfigure}[h]{0.24\textwidth}
         \centering
         \includegraphics[width=\textwidth]{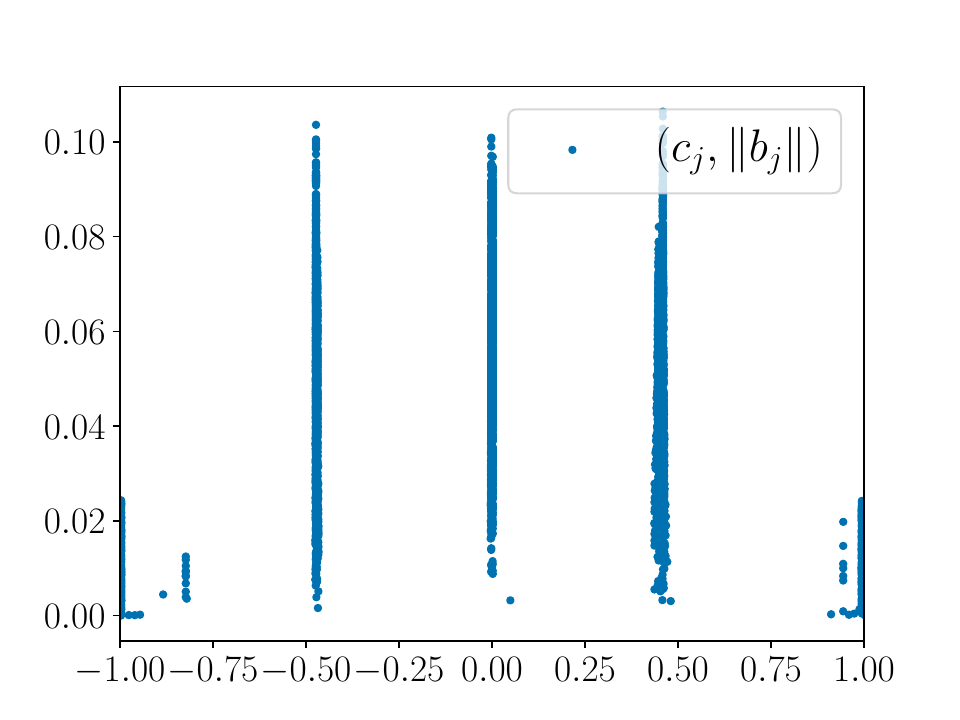}
         \caption{}
         \label{subfig:mst_reg_supp}
     \end{subfigure}
     \hfill
     \begin{subfigure}[h]{0.24\textwidth}
         \centering
         \includegraphics[width=\textwidth]{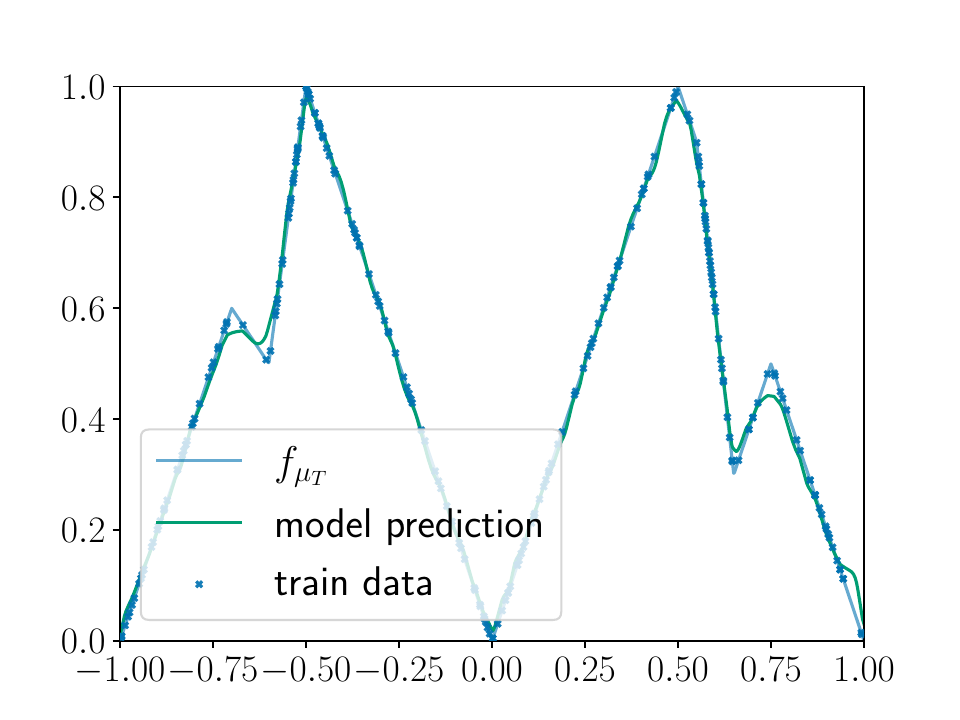}
         \caption{}
         \label{subfig:mst_class_preds}
     \end{subfigure}
     \hfill
          \begin{subfigure}[h]{0.24\textwidth}
         \centering
         \includegraphics[width=\textwidth]{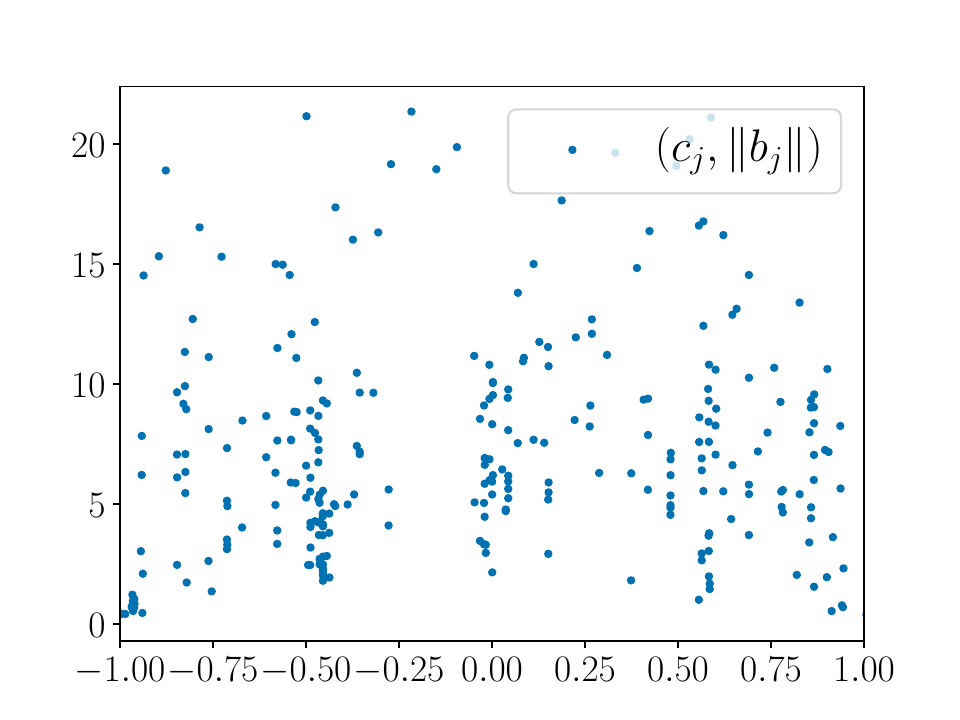}
         \caption{}
         \label{subfig:mst_class_supp}
     \end{subfigure}
        \caption{Predictions for the worst performing regression / classification models are depicted in Figures \ref{subfig:mst_reg_preds} / \ref{subfig:mst_class_preds}. Supports for the worst performing regression / classification models are depicted in Figures \ref{subfig:mst_reg_supp} / \ref{subfig:mst_class_supp}.}
        \label{fig:underfitting}
\end{figure*}

The validation RMSE for the thirty random intializations is displayed in Figure \ref{fig:rmse} and Table \ref{table:30runs}. Our regression task clearly exhibits the binning phenomenon; every classification models attained lower validation error than the best performing regression model. Moreover, the classification models were more stable to train, exhibiting less variance in performance over the thirty random initializations.

The predictions of the worst performing regression and classification models are depicted in Figures \ref{subfig:mst_reg_preds} and \ref{subfig:mst_class_preds} respectively. It can be seen that the regression model was unable to fit the smaller-scale triangles from the train data, converging to a local minima of the square loss.

Figures \ref{subfig:mst_reg_supp} and \ref{subfig:mst_class_supp} depict the kinks $c_j$ corresponding to the model's input weights $a_j$, for both the regression and classification model respectively. The x-axis depicts the position of a feature's kink, and the y-axis expresses the norm of the corresponding output-layer weight. The support of the regression model is notably sparse, with the kinks gathering at points corresponding to to the peaks of the larger-scale triangles. The model has struggled during optimization to recover all of $R_{reg}$, lacking the features whose kinks are located at peaks of the smaller-scale triangles, and as a consequence suffers under-fitting. 

On the other-hand, the classification model recovers a support which has features more evenly distributed across the unit interval, aligning with the optimal support $R_{class}$ described in Theorem \ref{theorem:opt_supp}. As a consequence, the classification model does not suffer the same optimization problem as the regression model.

\subsection{Discussion}

\citet{globalconvergence_bach} show that in the infinite width limit, the gradient flow of a two-layer neural network converges to the global minimizer of the problem. Our experiment indicates that even simple problems can result in global convergence only being guaranteed at extreme widths.

For regression data generated from a teacher network with $m_0$ hidden-neurons and Gaussian weights, \citet{spurrious} show that training a model with $m=m_0+1$ neurons helps alleviate under-fitting, postulating that increasing $m$ further aids optimization. This is a clear example where regression does not suffer under-fitting, and over-parameterization aids training. Our results indicate somewhat surprisingly that the implicit bias can play a fundamental role in gradient based optimization, even for over-parameterized models and when $m>>m_0$.

\citet{characterizing_goodfellow} demonstrate that on a straight line between the optimal parameters and a random initialization, various over-parameterized state of the art vision models encounter no local minima. We provide evidence in Appendix \ref{appendix:gradtraj} demonstrating that for even simple problems, both the regression and classification models can deviate from a linear path during optimization.

\section{IMPLICIT BIAS FOR HIGHER DIMENSIONS}\label{section:highdimex}
We provide an experiment that indicates that the properties of the supports provided in Section \ref{section:optimal} likely apply for higher dimensions. We generated regression data $(x_i,f_{\mu_T}(x_i))\in [-1,1]^3\times \mathbb{R}$ from a teacher network $\mu_T$ with three neurons and random weights, where $x_{3} = 1$. Similar to section \ref{subsection:experiment_setup}, we trained over-parameterized regression and classification models on the regression data and binned data (using $k=25$ uniform sized bins) respectively. The precise details of the experiment can be found in Appendix \ref{appendix:highdimex}.

Each feature $a_j$ is now characterized by the line where it ramps. That is to say, the points $x\in\mathbb{R}^3$ satisfying:
\begin{equation*}
    a_{j,1}x_1 + a_{j,2}x_2 + a_{j,3} = 0,
\end{equation*} where $x_3 = 1$. The critical lines characterizing the features of the regression and classification models are depicted in Figure \ref{fig:3dreg_supp} and \ref{fig:3dclass_sup}, respectively. 
We see that the regression model recovers a sparse support, whilst the classification model's features are more evenly distributed over unit square corresponding to $(x_1,x_2)$. These observations are similar to $R_{reg}$ and $R_{class}$ in the one-dimensional case, suggesting that the difference in implicit bias between regression and classification support we identified in one-dimensional problems is likely to hold in higher dimensions.

\begin{figure}[t]
     \centering
     \begin{subfigure}[h]{0.23\textwidth}
         \centering
         \includegraphics[width=\textwidth]{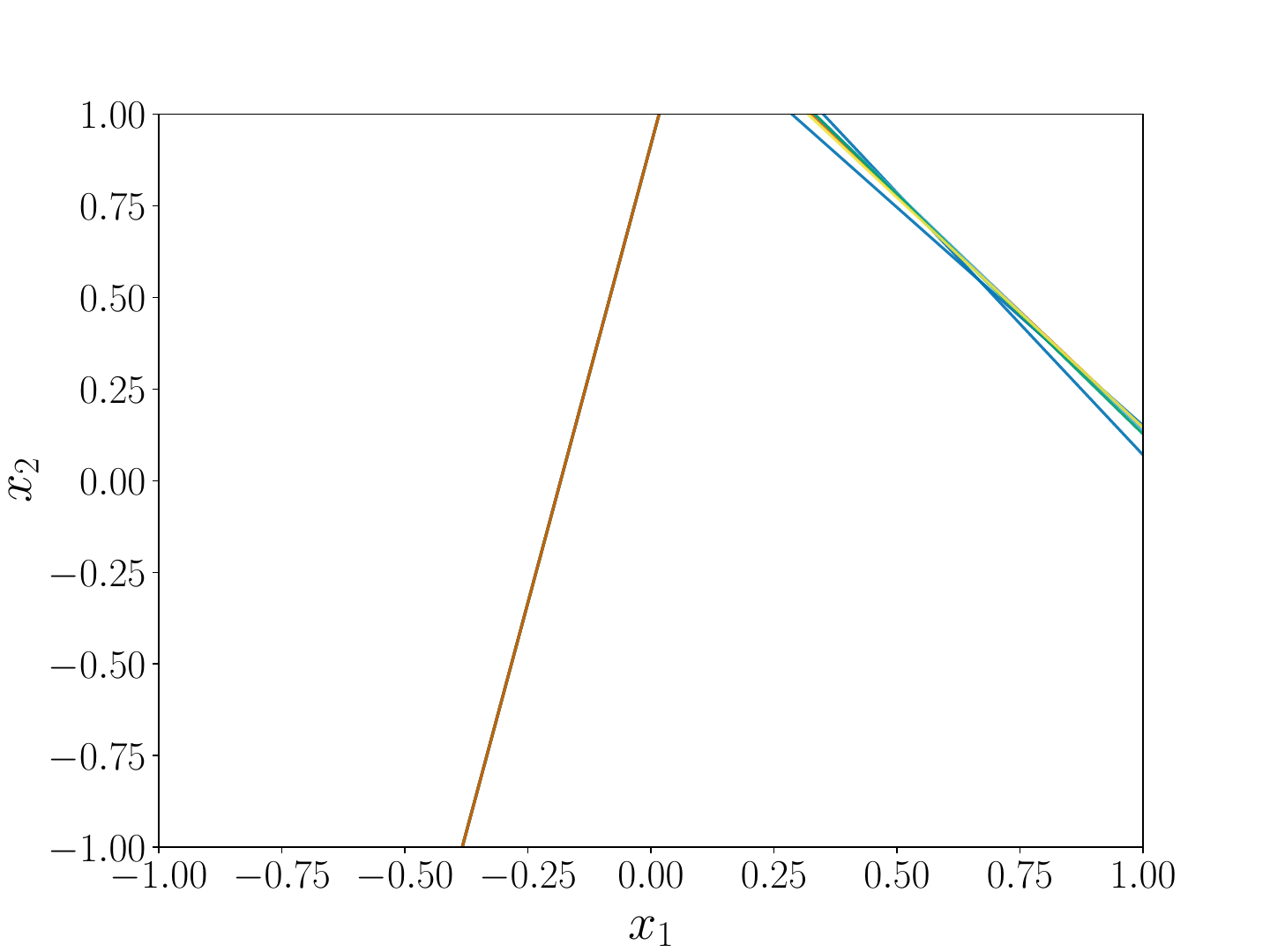}
         \caption{}
         \label{fig:3dreg_supp}
     \end{subfigure}
     \hfill
     \begin{subfigure}[h]{0.23\textwidth}
         \centering
         \includegraphics[width=\textwidth]{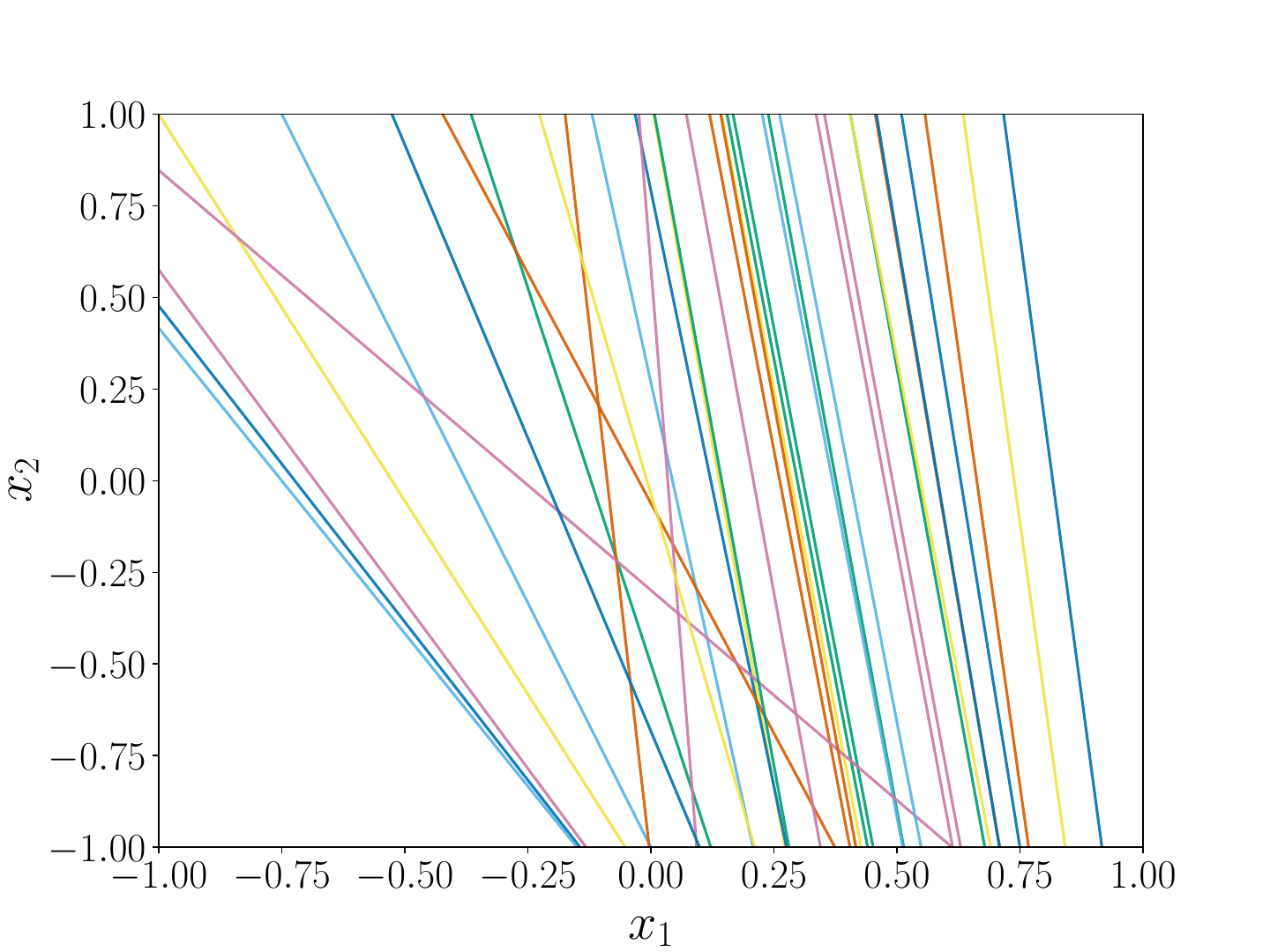}
         \caption{}
         \label{fig:3dclass_sup}
     \end{subfigure}
     \hfill
        \caption{Critical lines of the regression model's features (left) and classification model's features (right), for two-dimensional input data.}
        \label{fig:3d}
\end{figure}

\section{CONCLUSION}

We have presented supports $R_{reg}, \,R_{class}$ characterizing finite neural networks which are solutions to problems relating to known implicit biases for regression and classification, in the case of one-dimensional data. We postulated that the differences between these two supports provided one explanation for the binning phenomenon. This claim was supported by numerical experiments, demonstrating that over-parameterized models learn features which notably coincide with the supports we proposed. Moreover, our synthetic problem clearly exhibited the binning phenomenon, resulting from the inability of the regression model to recover all of the sparse optimal support during training. Finally, we provided empirical evidence that the characteristics of our proposed supports hold in higher dimensions.

As far as we are aware, the implicit biases of
arbitrarily deep neural networks is still an active research topic and is not currently known. If theorems similar to those of
\citet{classbias_bach} and \citet{regbias_boursier} are obtained for deeper models, it may indeed be possible to extend the reasoning presented in our paper to deep neural nets by inspecting layer-wise features.

Our results raise many questions, both from a practical perspective and from a theoretical stand-point.

\textbf{Practice:} For some problems, the cross-entropy loss outperforms the square loss on regression tasks, despite it having no information about relationship between classes. Future works could investigate how to best incorporate the notion of adjacency between the bins, building on existing works such as \citet{ordinal_graph_classif}.  Other directions could include exploring different ways to discretize the data (e.g., jointly learning bins of differing sizes), or how best to choose the number of bins $k$ for discretization.

\textbf{Theory:} A natural progression would be to prove a Theorem similar to that of \ref{theorem:opt_supp}, but for the implicit biases described in \citet{regbias_boursier,classbias_bach}. Another option would be to extend the results of Theorem~\ref{theorem:opt_supp} to the case of multi-dimensional data. Further works on the implicit bias of deep models could help to explain the binning phenomenon reported in the literature mentioned in Section \ref{section:intro}.

\subsection*{Acknowledgements}

 Thanks to Wilson Jallet and Nat McAleese for discussions relating to the binning phenomenon. We would also like to extend our thanks and gratitude to all of the reviewers for their constructive feedback.  We also acknowledge support from the French government under the management of the Agence Nationale de la Recherche as part of the “Investissements d’avenir” program, reference ANR-19-P3IA-0001 (PRAIRIE 3IA Institute), as well as from the European Research Council (grant SEQUOIA 724063).

\bibliography{refs}
\bibliographystyle{abbrvnat}

\newpage
\appendix

\onecolumn
\textbf{\Large Appendix}

\section{PROOF OF IMPLICIT BIAS FOR MULTI-CLASS SOFTMAX REGRESSION}\label{appendix:bach_extension}

We consider the set-up of \citet[Theorem 3]{classbias_bach}, and can follow the exact same proof, except that now the feature map is $k$-dimensional rather than $1$-dimensional. Assumption $(A1)$ is unchanged, while Assumption $(A3)$ is considered component-wise.

For Assumption $(A2)$, we use the framework of Theorem 7 from~\citet{soudry2018implicit} instead of the one in Theorem 3.

We can then extend the informal argument from \citet{classbias_bach} that when using a predictor $\sum_{i=1}^m b_j (a_j^\top x)_+$, we converge to the minimum $\ell_2$-norms $\sum_{i=1}^m \| b_j\|_2^2 + \| a_j\|_2^2$, which is minimized by scaling invariance as $2\sum_{i=1}^m \| b_j\|_2 \| a_j\|_2$, and thus, writing the predictor as:
$$\sum_{i=1}^m b_j (a_j^\top x)_+=\sum_{i=1}^m b_j \|a_j\|_2 ((a_j/\|a_j\|_2)^\top x)_+ =\int_{S^{d-1}}(a^Tx)_+d\nu(a)  \quad \quad \text{for} \quad \nu = \sum_{j=1}^mb_j\|a_j\|\delta_{\frac{a_j}{\|a_j\|}},$$ the penalty is exactly proportional to the total variation norm with $\ell_2$-penalties.

\section{MINIMIZING TOTAL VARIATION OF A MEASURE WITH CONVEX LOWER SEMI-CONTINUOUS CONSTRAINTS}\label{appendix:derive_dual}
\begin{lemma}\label{lemma:sup_int_norm} Let 
  $\left\langle \cdot , \cdot \right\rangle$ denote the Euclidean inner-product defined on $\mathbb{R}^k$, and let $\Omega$ denote any norm on $\mathbb{R}^k$. Then for any measurable function $g:\mathbb{U} \rightarrow \mathbb{R}^k$:

\begin{equation} \sup\limits_{\mu\in\mathcal{M}(\mathbb{U},\mathbb{R}^k)} \int_\mathbb{U} \Big( \left\langle
  g(u), d\mu(u) \right\rangle - \Omega\left( d\mu(u) \right) \Big) \quad = \quad
  \begin{cases}
  0 \quad &\text{if} \quad \Omega_*\left( g(u) \right) \leq 1 \quad \forall u \in
    \mathbb{U}\\ \infty \quad & \text{otherwise.}
  \end{cases}\end{equation}\end{lemma}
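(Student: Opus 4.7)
The strategy is to split into two cases based on whether the candidate dual constraint $\Omega_*(g(u)) \leq 1$ holds everywhere, and in both cases the key tool is the Fenchel--Young inequality for dual norms, namely $\langle y, w\rangle \leq \Omega_*(y)\cdot\Omega(w)$ for all $y,w\in\mathbb{R}^k$, with the upper bound being attained (up to scaling) by an appropriate witness $w$ whenever $\Omega_*(y)$ is positive.

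First I would handle the feasible case, assuming $\Omega_*(g(u)) \leq 1$ for all $u\in\mathbb{U}$. Writing $\mu$ via Radon--Nikodym against a dominating positive Radon measure $\lambda$ on $\mathbb{U}$, so that $d\mu(u) = f(u)\,d\lambda(u)$ for some measurable $f\colon\mathbb{U}\to\mathbb{R}^k$, the integrand pointwise satisfies
\begin{equation*}
\langle g(u), f(u)\rangle - \Omega(f(u)) \;\leq\; \Omega_*(g(u))\,\Omega(f(u)) - \Omega(f(u)) \;\leq\; 0,
\end{equation*}
using the dual norm inequality and the hypothesis. Integrating against $\lambda$ shows the full expression is $\leq 0$ for every $\mu\in\mathcal{M}(\mathbb{U},\mathbb{R}^k)$, and the trivial choice $\mu = 0$ attains equality, giving supremum exactly $0$.

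Next I would handle the infeasible case. Suppose there exists $u_0\in\mathbb{U}$ with $\Omega_*(g(u_0)) > 1$. By definition of the dual norm, I can pick $w\in\mathbb{R}^k$ with $\Omega(w)\leq 1$ and $\langle g(u_0), w\rangle > 1$. For $t>0$ consider the point mass $\mu_t = t w\cdot \delta_{u_0}\in\mathcal{M}(\mathbb{U},\mathbb{R}^k)$. Direct computation gives
\begin{equation*}
\int_{\mathbb{U}} \langle g(u),d\mu_t(u)\rangle = t\langle g(u_0),w\rangle, \qquad \int_{\mathbb{U}} \Omega(d\mu_t(u)) = \Omega(tw) = t\,\Omega(w)\leq t,
\end{equation*}
so the difference is at least $t(\langle g(u_0),w\rangle - 1) > 0$, which tends to $+\infty$ as $t\to\infty$. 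Hence the supremum is $+\infty$.

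The only subtle step is giving $\int_{\mathbb{U}}\Omega(d\mu(u))$ a rigorous meaning for a vector-valued Radon measure; this is handled by the Radon--Nikodym construction above, and positive homogeneity of $\Omega$ makes the resulting integral independent of the choice of dominating $\lambda$. Once this is in place the two cases are short and I do not anticipate any serious obstacle.
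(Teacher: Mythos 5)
Your proposal is correct and follows essentially the same route as the paper: rule out the case $\Omega_*(g(u_0))>1$ by scaling a point mass $t\,w\,\delta_{u_0}$ to make the objective unbounded, and in the feasible case use the dual-norm (Fenchel--Young) inequality pointwise to bound the integrand by zero, with $\mu=0$ attaining the supremum. Your explicit Radon--Nikodym step to give $\int_{\mathbb{U}}\Omega(d\mu(u))$ a precise meaning is a minor added rigor over the paper's version, but the argument is the same.
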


\begin{proof} Suppose that $\exists u_0 \in \mathbb{U}$ such that $\Omega_*\left( g(u_0) \right)
  > 1$. By definition of the dual norm, this implies:

  \begin{equation*}   \left\langle v_0, g(u_0) \right\rangle =   \sup\limits_{\Omega( v
  ) =1} \left\langle v, g(u_0) \right\rangle > 1, \end{equation*}

  where we have used $v_0 \in \mathbb{R}^k$ to denote the vector that attains the supremum. For $t>0$, consider the measure $\mu = tv_0 \delta_{u_0} \in \mathcal{M}(\mathbb{U}, \mathbb{R}^k)$. Then:

  \begin{align*} \int_\mathbb{U} \Big( \left\langle
  g(u), d\mu(u) \right\rangle - \Omega\left( d\mu(u) \right) \Big) &= \left\langle g(u_0), tv_0\right\rangle - \Omega\left( tv_0
  \right) \\ &= t \left( \underbrace{ \left\langle g(u_0) ,v_0 \right\rangle - \Omega\left( v_0
\right)}_{>1} \right).\end{align*}

Hence taking the limit as $t\rightarrow \infty$ leads to an unbounded
supremum. Conversely, suppose that $\Omega_* \left( g(u) \right) \leq 1 \quad \forall u \in
\mathbb{U}$. Then for any $u\in \mathbb{U}$ and $v\in \mathbb{R}^k$ one has: 

\begin{align*} \left\langle g(u), v \right\rangle &= \Omega (v) \, \left\langle g(u),
\frac{v}{\Omega (v)} \right\rangle \\ &\leq \Omega (v)
\sup\limits_{\Omega (w) = 1} \left\langle g(u), w \right\rangle \\ &= \Omega(v) \, \Omega_* \left( g(u) \right) \leq \Omega (v). \end{align*}

We conclude that $\int_\mathbb{U} \Big( \left\langle
  g(u), d\mu(u) \right\rangle - \Omega\left( d\mu(u) \right) \Big)\leq 0$, and hence:

\begin{equation*} \Omega \left( g(u) \right)_* \leq
1 \quad \forall a\in A \Longrightarrow \sup\limits_{\mu\in\mathcal{M}(\mathbb{U}, \mathbb{R}^k)}
\int_\mathbb{U} \Big( \left\langle
  g(u), d\mu(u) \right\rangle - \Omega\left( d\mu(u) \right) \Big)
\quad = \quad 0. \end{equation*}

\end{proof}

\begin{lemma}

The dual of problem \eqref{eq:general_mintv} is:
  \begin{equation*} \begin{aligned} & \underset{\alpha_1\ldots,\alpha_n \in \mathbb{R}^{k}}{\sup} & &
    -\sum_{i=1}^n \sigma_{S_i}(\alpha_i)\\ & \text{subject to} & & \Omega_*\left(
    \sum_{i=1}^n \alpha_i \phi_u(x_i) \right)\leq
1 \quad \forall u\in \mathbb{U}. \end{aligned} \end{equation*}
\end{lemma}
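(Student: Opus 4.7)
The plan is to derive the dual by introducing a Fenchel dual variable $\alpha_i \in \mathbb{R}^k$ for each indicator term $I_{S_i}$, swapping the order of supremum and infimum, and then evaluating the resulting inner minimization over $\mu$ with the help of Lemma \ref{lemma:sup_int_norm}. Since $S_i$ is a non-empty closed convex set, $I_{S_i}$ is a proper closed convex function, and its Fenchel conjugate is precisely the support function $\sigma_{S_i}$. The biconjugate identity therefore gives, for every $v \in \mathbb{R}^k$,
\begin{equation*}
    I_{S_i}(v) \;=\; \sup_{\alpha_i \in \mathbb{R}^k} \Big\{\langle \alpha_i, v\rangle - \sigma_{S_i}(\alpha_i)\Big\}.
\end{equation*}

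Next, I would substitute $v = f_\mu(x_i) = \int_\mathbb{U} \phi_u(x_i)\, d\mu(u)$ into the primal. This rewrites problem \eqref{eq:general_mintv} as
\begin{equation*}
    \inf_{\mu} \; \sup_{\alpha_1,\ldots,\alpha_n} \left\{\int_\mathbb{U} \Omega\bigl(d\mu(u)\bigr) + \sum_{i=1}^n \Big[\langle \alpha_i, f_\mu(x_i)\rangle - \sigma_{S_i}(\alpha_i)\Big] \right\}.
\end{equation*}
Swapping the $\inf_\mu$ and $\sup_{\alpha}$ (this produces the dual bound, i.e.\ weak duality, which is all that is needed to identify the dual program) and using linearity of the integral to move $\sum_i \alpha_i \langle \cdot,\cdot\rangle$ inside gives
\begin{equation*}
    \sup_{\alpha_1,\ldots,\alpha_n} \; \Biggl\{ -\sum_{i=1}^n \sigma_{S_i}(\alpha_i) \;+\; \inf_{\mu}\int_\mathbb{U}\Big[\Omega\bigl(d\mu(u)\bigr) + \bigl\langle g(u), d\mu(u)\bigr\rangle \Big] \Biggr\},
\end{equation*}
where $g(u) := \sum_{i=1}^n \alpha_i \phi_u(x_i)$.

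The final step is to resolve the inner infimum. Rewriting it as $-\sup_\mu \int_\mathbb{U} \bigl[\langle -g(u), d\mu(u)\rangle - \Omega(d\mu(u))\bigr]$, Lemma \ref{lemma:sup_int_norm} applied to the measurable function $-g$ shows the value equals $0$ whenever $\Omega_*(-g(u)) \leq 1$ for every $u \in \mathbb{U}$, and $-\infty$ otherwise. Because $\Omega_*$ is a norm, $\Omega_*(-g(u)) = \Omega_*(g(u))$, and the $-\infty$ case is discarded by the outer supremum. Consequently the dual reduces to $\sup_{\alpha} \bigl\{-\sum_i \sigma_{S_i}(\alpha_i)\bigr\}$ subject to $\Omega_*(\sum_i \alpha_i \phi_u(x_i))\leq 1$ for all $u\in\mathbb{U}$, which is exactly \eqref{eq:general_dual}.

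The step I expect to be most delicate is the sup-inf exchange, since both sides live in infinite-dimensional spaces (a space of signed vector-valued Radon measures and $\mathbb{R}^{nk}$). The cleanest route is to invoke it only as weak duality, which is sufficient to identify \eqref{eq:general_dual} as \emph{the} dual; strong duality (needed later in Corollary \ref{corollary:feasilble_implies_optimal}) is obtained separately from the convex-constrained norm-minimization structure of the primal together with existence of a feasible $\mu$, as already noted in the paper. The remaining technicality is ensuring integrability so that $\sum_i \alpha_i \langle f_\mu(x_i), \cdot\rangle$ can indeed be rewritten as $\int_\mathbb{U} \langle g(u), d\mu(u)\rangle$; this is a routine Fubini/linearity argument since $\phi_u(x_i)$ is bounded on $\mathbb{U}$ and $\mu$ has finite total variation.
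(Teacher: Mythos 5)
Your proposal is correct and follows essentially the same route as the paper's own proof in Appendix~\ref{appendix:derive_dual}: Fenchel conjugation of each $I_{S_i}$ into $\sup_{\alpha_i}\{\langle\alpha_i, f_\mu(x_i)\rangle-\sigma_{S_i}(\alpha_i)\}$, exchange of $\inf_\mu$ and $\sup_\alpha$, and resolution of the inner infimum via Lemma~\ref{lemma:sup_int_norm} to produce the constraint $\Omega_*\bigl(\sum_i\alpha_i\phi_u(x_i)\bigr)\leq 1$. Your remarks on the sign symmetry of $\Omega_*$ and on needing only weak duality at this stage are consistent with how the paper treats the exchange.
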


\begin{proof}
By Fenchel duality \citep{bidual} one has:
    
    \begin{equation*} I_{S_i}^{**}\left(f_\mu(x_i)\right) = I_{S_i}\left(f_\mu(x_i)\right) = \sup\limits_{\alpha_i \in
    \mathbb{R}^k} \: \left\{ \langle \alpha_i, f_\mu(x_i) \rangle -
\sigma_{S_i}(\alpha_i) \right\}. \end{equation*} 

  Plugging this into the Lagrangian we obtain: 

  \begin{align*} L(\mu) &= \int_{\mathbb{U}} \Omega\left( d\mu(u) \right) \quad +
   \quad  \sum_{i=1}^n\sup\limits_{\alpha_i \in \mathbb{R}^k} \: \left\{ \langle
    \alpha_i, f_\mu(x_i) \rangle - \sigma_{S_i}(\alpha_i) \right\} \\ &=  \int_{\mathbb{U}}
    \Omega\left( d\mu(u) \right) \quad +  \quad \sup\limits_{\alpha_1,\ldots,\alpha_n \in \mathbb{R}^k}
    \sum_{i=1}^n \: \left\{ \langle \alpha_i, f_\mu(x_i) \rangle -
    \sigma_{S_i}(\alpha_i) \right\} \\ &=  \int_{\mathbb{U}} \Omega\left( d\mu(u) \right) \quad + \quad 
    \sup\limits_{\alpha_1,\ldots,\alpha_n \in \mathbb{R}^k} \left\{  \int_{\mathbb{U}} \big\langle
      \sum_{i=1}^n\ \alpha_i \phi_u(x_i), d\mu(u) \big\rangle  \quad - \:
      \sum_{i=1}^n \sigma_{S_i}(\alpha_i) \right\}\\ &= \sup\limits_{\alpha_1,\ldots, \alpha_n \in\mathbb{R}^k} \left \{ - \int_{\mathbb{U}} \Big( \big\langle
\underbrace{-\sum_{i=1}^n\ \alpha_i \phi_u(x_i)}_{g(u)}, d\mu(u) \big\rangle \:
- \: \Omega\left( d\mu(u) \right) \Big) \quad -  \sum_{i=1}^n \sigma_{S_i}(\alpha_i) \right\}.
    \end{align*}

The primal value $\inf\limits_{\mu\in\mathcal{M}(\mathbb{U}, \mathbb{R}^k)} L(\mu)$ is hence:

\begin{equation*}  \inf\limits_{\mu\in\mathcal{M}(\mathbb{U}, \mathbb{R}^k)} \:\: \sup\limits_{\alpha_1,\ldots,\alpha_n \in \mathbb{R}^k} \left \{ - \int_{\mathbb{U}} \Big( \big\langle g(u), d\mu(u) \big\rangle \: - \:
\Omega\left( d\mu(u) \right) \Big) \quad - \sum_{i=1}^n \sigma_{S_i}(\alpha_i) \right\}.
\end{equation*}

The dual problem is obtained by switching the order of the supremum and
infinum: 

    \begin{align*} &   \sup\limits_{\alpha_1,\ldots,\alpha_n \in \mathbb{R}^k}
      \:\:\inf\limits_{\mu\in\mathcal{M}(\mathbb{U}, \mathbb{R}^k)} \left \{ - \int_{\mathbb{U}} \Big( \big\langle
      g(u), d\mu(u) \big\rangle \: - \: \Omega\left( d\mu(u) \right) \Big) \quad -
      \sum_{i=1}^n \sigma_{S_i}(\alpha_i) \right\} \\  = & \sup\limits_{\alpha_1,\ldots,\alpha_n \in \mathbb{R}^k} \left\{ \quad \inf\limits_{\mu\in\mathcal{M}(\mathbb{U}, \mathbb{R}^k)} \Big\{ - \int_{\mathbb{U}} \Big(
\big\langle g(u), d\mu(u) \big\rangle \: - \: \Omega\left( d\mu(u) \right) \Big)
\:\:\Big\} \quad  - \sum_{i=1}^n \sigma_{S_i}(\alpha_i) \right\}. \end{align*}

Applying Lemma \ref{lemma:sup_int_norm} we conclude the dual problem is:

  \begin{equation*} \begin{aligned} & \underset{\alpha_1\ldots,\alpha_n \in \mathbb{R}^{k}}{\sup} & &
    -\sum_{i=1}^n \sigma_{S_i}(\alpha_i)\\ & \text{subject to} & & \Omega_*\left(
    \sum_{i=1}^n \alpha_i \phi_u(x_i) \right)\leq
1 \quad \forall u\in \mathbb{U}. \end{aligned} \end{equation*}

 \end{proof} 

\section{PROOF OF LEMMA \ref{lemma:sufficient_dualfeasible}}\label{appendix:dual_Ux}

Define:
\begin{align*}
    \mathbb{U}^+ &= \{1\} \times [-1,1] \\
    \mathbb{U}^- &= \{-1\} \times [-1,1],
\end{align*}
and note that $\mathbb{U} = \mathbb{U}^+ \cup \mathbb{U}^-$. Similarly we define:
\begin{align*}
    {U_X}^+ &= \{1\} \times\{x_1,\ldots,x_n\} \\
   {U_X}^- &= \{-1\} \times\{x_1,\ldots,x_n\}.
\end{align*}

Firstly, let us show that:

\begin{equation}\label{eq:rightfeats}
  \Omega_*\left(
    \sum_{i=1}^n \alpha_i \phi_u(x_i) \right)\leq
1 \quad \forall u\in \mathbb{U}^+.
\end{equation}

This is equivalent to showing that:
\begin{equation}\label{eq:c_rightfeats}
  \Omega_*\left(
    \sum_{i=1}^n \alpha_i (x_i-c) \right)\leq
1 \quad \forall c\in[-1,1].
\end{equation}

By assumption, we know that:
\begin{align*}
  &\Omega_*\left(
    \sum_{i=1}^n \alpha_i \phi_u(x_i) \right)\leq
1 \quad \forall u\in {U_X}^+ \\
\Longleftrightarrow \quad &\Omega_*\left(
    \sum_{i=1}^n \alpha_i (x_i -c)_+ \right)\leq
1 \quad \forall c\in\{x_1,\ldots,x_n\}.
\end{align*}

 Let $g(c)  = \sum_{i=1}^n \alpha_i (x_i-c)_+$ and remark that $g$ is a piece-wise affine function with line segments meeting at $\{x_1, \ldots, x_n\}$. As a consequence, $\forall c\in[-1,1]$, $\exists\theta \in [0,1]$ and $i\in [n-1]$ such that:\begin{equation*}
    g(c) =  \theta \, g(x_i) + (1-\theta) \, g(x_{i+1}).
\end{equation*}
By the convexity of the dual norm, we have:
\begin{equation*}
    \Omega_*\left(g(c)\right) \leq \theta\, \underbrace{\Omega_*\left(g(x_i)\right)}_{\leq1} + (1-\theta)\,\underbrace{\Omega_*\left(g(x_{i+1})\right)}_{\leq 1} \leq 1.
\end{equation*}
We conclude that \eqref{eq:c_rightfeats} (and hence \eqref{eq:rightfeats}) hold. We conclude by repeating the same argument for $\mathbb{U}^-$, but replacing $g(c)$ with $h(c) = \sum_{i=1}^n \alpha_i(c-x_i)_+$.

\section{CONSTRUCTION OF FEASIBLE MEASURES}\label{appendix:constructive}

\subsection{REGRESSION}\label{appendix:reg_feasproof}
\begin{proof}

\begin{figure}[ht]
    \centering
    \includegraphics[width=0.4\linewidth]{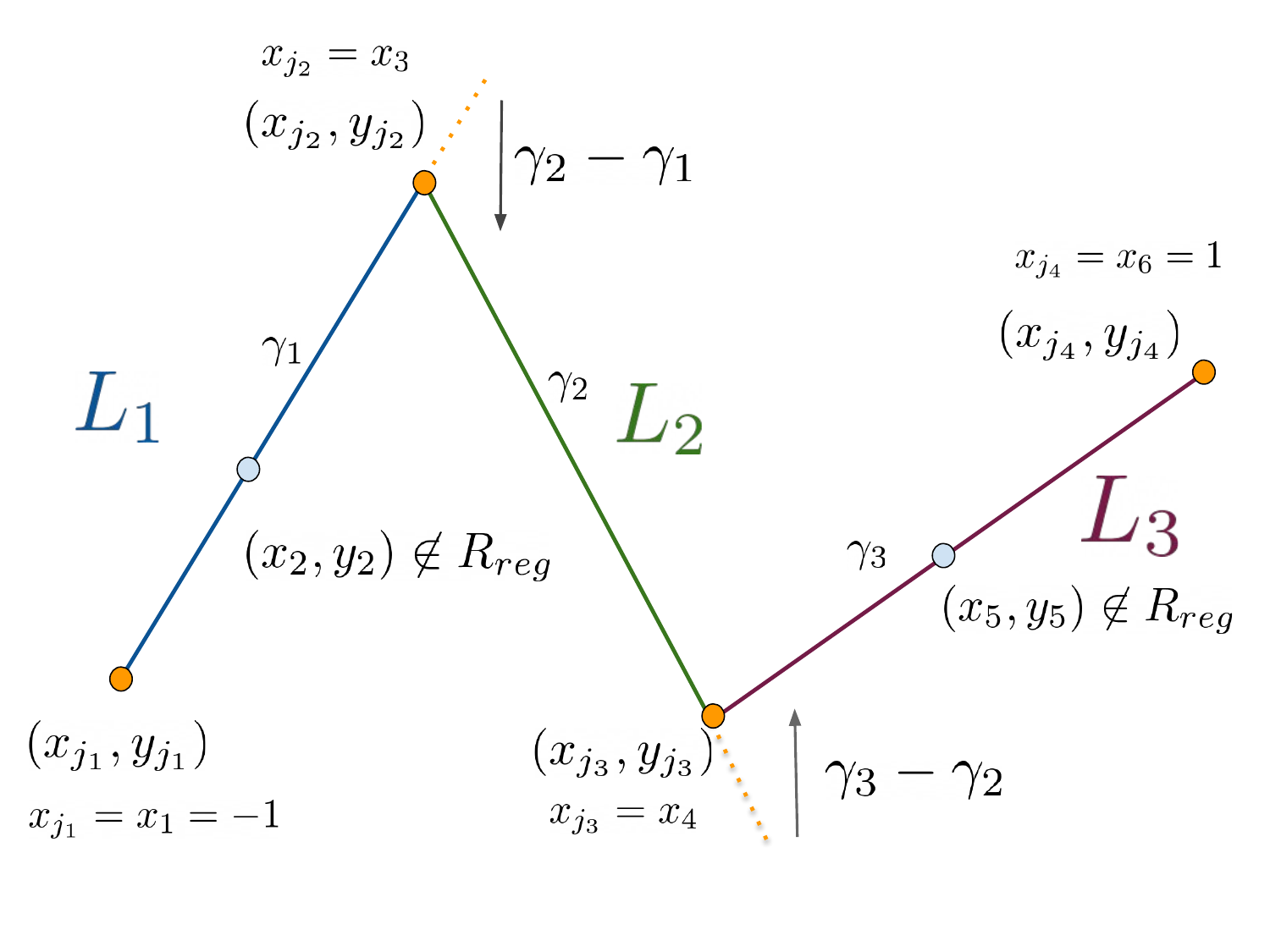}
    \vspace{-0.5em}
    \caption{Graphical depiction of $R_{reg}$ for six arbitrary data points $(x_1,y_1),\ldots, (x_6,y_6)$. In this example $J_{reg}=\{1,3,4,6\}$, resulting in a piece-wise interpolant with three line components $L_1, L_2$ and $L_3$.}
    \label{fig:reg_diag}
\end{figure}

As in the statement let $-1=x_1<\cdots< x_n=1$ and $y_1,\cdots, y_n \in \mathbb{R}$ denote the regression data. We define:

\begin{equation} J_{reg} =\{i\in [n] \: : \: x_i \in R_{reg}\},\end{equation} 

as the set of indices $i$ which correspond to data points $x_i \in R_{reg}$. Let $m=\left\lvert J_{reg}\right\rvert$, and note that $\{1,n\}\subseteq J_{reg} \Longrightarrow m \geq 2$. Without loss of generality we assume that the elements of $J_{reg}$ are sorted in increasing order $1=j_1<\cdots<j_m=n$. 

For $l\in [m-1]$, let $L_l$ denote the line passing through $\left(x_{j_l},y_{j_l}\right)$ and $\left(x_{j_{l+1}},y_{j_{l+1}}\right)$. By definition, the equation of each of the $m-1$ lines will be:

\begin{equation}
        L_l(x) = \gamma_l (x-x_{j_l}) + y_{j_l},
\end{equation}

where:

\begin{equation}
\gamma_l = \frac{y_{j_{l+1}} - y_{j_l}}{x_{j_{l+1}} - x_{j_l}}.
\end{equation}

A graphical depiction of this above notation can be found in Figure \ref{fig:reg_diag}. Finally, we write $P$ to denote the piece-wise linear interpolant of the data $\{(x_i,y_i)\}_{i=1}^n$, where:

\begin{equation}
 x\in [x_{j_l}, x_{j_{l+1}}] \quad \Longrightarrow \quad P(x) = L_l(x) \quad \quad  \forall l\in[m-1].
\end{equation}

It is sufficient to construct a measure $\mu\in\mathcal{M}(U_X, \mathbb{R})$ such that $f_\mu = P$, since then $f_\mu(x_i)= P(x_i) = y_i \quad \forall i\in[n]$.

We begin by first constructing a measure $\mu_1$ such that $f_{\mu_1}(x) = L_1(x)$, which we will later build upon to construct $\mu$.  From the definition of $R_{reg}$, the measure $\mu_1$ will satisfy $f_{\mu_1}(x_i)=y_i$ for all $i\in [j_2]$.  

Let $u_1,u_n\in U_X$ be defined as $u_1 = (1,x_1)$ and $u_n = (-1,x_n)$. We claim that $\mu_1$ can be written in the following form:

\begin{equation}
    \mu_1 = w_l \delta_{u_1} + w_r \delta_{u_n}.
\end{equation}

To show this is true, we search for weights $w_r,w_l \in \mathbb{R}$ that satisfy:

\begin{align*}
   L_1(x) = \gamma_1(x-x_1) + y_1 &=  f_{\mu_1}(x) \\  &= \int_\mathbb{U} \phi_u(x) \, d\mu_1(u) \\ &= w_l\phi_{u_1}(x) + w_r\phi_{u_n}(x) \\
   &= w_l (x-x_1)_+ + w_r (x_n - x)_+ \\
   &= w_l (x - (-1))  + w_r (1 - x)_+ \\
   &= w_l (x+1)_+ + w_r(1-x)_+.
\end{align*}

Substituting $x = 1$ gives:

\begin{align*}
   w_l(2)_+ + w_r(0)_+   &= 2\gamma_1 + y_1 \\ 
\Longrightarrow w_l = \gamma_1 + \frac{y_1}{2}.
\end{align*}

Similarly, substituting $x = -1$ gives:

\begin{align*}
   w_l(0)_+ + w_r(2)_+   &= 0\gamma_1 + y_1 \\ 
\Longrightarrow w_r = \frac{y_1}{2}.
\end{align*}

In the case that $m=2$ we are done, since $f_{\mu_1} = P$. Otherwise, consider the measure:






\begin{equation}
    \mu = \underbrace{ \frac{y_1}{2}\delta_{u_n} + \left(\frac{y_1}{2}+\gamma_1\right)\delta_{u_1}}_{\mu_1} + \sum_{l=2}^{m-1} (\gamma_{l+1} - \gamma_l) \, \delta_{u_{j_l}} \quad \in \mathcal{M}(U_X,\mathbb{R}),
\end{equation}

where $u_{j_l} = (1,x_{j_l} ) $. We claim that $f_\mu = P$, which is equivalent to saying that:
\begin{equation}
    \forall l \in [m-1], \quad f_\mu(x) = L_l(x) \quad \forall x\in [x_{j_l},  x_{j_{l+1}}].
\end{equation}
We will show by induction that the above statement holds. The base case $l=1$ is immediately verified, as $f_\mu(x) = L_1(x) \quad \forall x\in [x_{j_1}, x_{j_2}]$. To prove the inductive step, suppose for some $q \in [m-1]$ that the following holds:

\begin{equation}
    \forall l \in [q-1], \quad f_\mu(x) = L_l(x) \quad \forall x\in [x_{j_l},  x_{j_{l+1}}].
\end{equation}

We need to show that:

\begin{equation}
    \forall l \in [q], \quad f_\mu(x) = L_l(x) \quad \forall x\in [x_{j_l},  x_{j_{l+1}}].
\end{equation}

To do this, we note that:

\begin{equation*}
\begin{aligned}
    x\in [x_{j_{q-1}}, \, x_{j_{q}}] \quad  \Longrightarrow \quad f_\mu(x) &= L_{q-1}(x)  + \left(\gamma_{q} - \gamma_{q-1}\right) \, \left(x-x_{j_{q-1}}\right)_+ \\ 
    &= L_{q}(x).
\end{aligned}
\end{equation*}

\end{proof}

\subsection{CLASSIFICATION}
\begin{proof}
    
As in the statement let $-1=x_1<\cdots< x_n=1$ and $y_1,\ldots, y_n \in [k]$ denote the classification data. We define the one-hot labels $Z\in\{0,1\}^{n\times k}$ as:

\begin{equation}
    Z_{i,l} = \begin{cases}
        1 \quad &\text{if} \quad y_i = l \\
        0 \quad &\text{otherwise.}
    \end{cases}
\end{equation}

We define $k$ regression data-sets $D_1, \dots, D_k$, where:

\begin{equation}
    D_l = \left\{ (x_i,Z_{i,l}) \right\}_{i=1}^n \quad \forall l \in [k].
\end{equation}

Applying the result obtained from Appendix \ref{appendix:reg_feasproof}, $\exists \mu_1, \ldots, \mu_k \in \mathcal{M}(U_X, \mathbb{R})$ such that:

\begin{equation}
    f_{\mu_l}(x_i) = Z_{i,l} \quad \forall i\in [n] \quad \forall l\in [k].
\end{equation}

 By definition of $Z$, we conclude that $\mu = (\mu_1, \ldots, \mu_k) \in \mathcal{M}(U_X, \mathbb{R}^k)$ is feasible for problem \eqref{eq:implbias_class}.

\end{proof}

\section{PROOF OF THEOREM \ref{theorem:opt_supp}}\label{appendix:theoremproof}
\subsection{REGRESSION}

Let $J_{reg} = \{i\in[n] \, : \, x_i \in R_{reg} \}$ and let $m=\left\lvert J_{reg} \right\rvert$. 

Consider the following problem:

\begin{equation}\label{eq:dummyprob_reg} \begin{aligned} & \inf\limits_{\mu\in\mathcal{M}(\mathbb{U}, \mathbb{R})} &&\int_\mathbb{U} \left\lvert d\mu(u) \right\rvert \\ & \text{subject to}   &&f_\mu(x_j) = y_j \quad \forall j \in J_{reg}.\end{aligned} \end{equation}

 Let $P_1$ and $P_2$ be the primal values for problems \eqref{eq:implbias_reg} and \eqref{eq:dummyprob_reg} respectively. As problem \eqref{eq:dummyprob_reg} has less constraints than problem \eqref{eq:implbias_reg}, we can remark that $P_2 \leq P_1$. By Proposition \ref{prop:restrict_data}, $\exists \mu^* \in \mathcal{M}(U_X,\mathbb{R})$ and ${\alpha_1}^*, \dots , {\alpha_{m}}^* \in \mathbb{R}$ optimal for problem \eqref{eq:dummyprob_reg} satisfying $\text{supp}(\mu^*) \subseteq F_{reg}$.

Assume without loss of generality that $J_{reg}$ is ordered, with $1=j_1<\cdots<j_m= n$. For $i\in J_{reg}$, let $\psi(i)\in [m]$ denote the position of $i$ in the ordered list $j_1,\ldots,j_m$. We construct ${\tilde{\alpha}}_1,\dots,{\tilde{\alpha}}_n \in \mathbb{R}$ as follows:
\begin{equation}
    \tilde{\alpha}_i = \begin{cases} 0 \quad &\text{if} \quad i\not\in R_{reg} \\ \alpha^*_{\psi(i)} \quad & \text{if} \quad i\in R_{reg}. \end{cases} 
\end{equation}

${\alpha^*}_1,\dots,{\alpha^*}_m$ correspond to the $m$ data points $(x_{j_1},y_{j_1}), \dots, (x_{j_m}, y_{j_m})$. Our constructed ${\tilde{\alpha}}_1,\dots,{\tilde{\alpha}}_n$ is the extension of the above to all of the train data $(x_1,y_1),\dots, (x_n,y_n)$, where $i\not\in R_{reg} \Longrightarrow {\tilde{\alpha}}_i = 0$.

By construction, $\mu^*$ and ${\tilde{\alpha}}_1, \ldots, {\tilde{\alpha_n}}$ attain strong duality for \eqref{eq:implbias_reg}. However, it remains to verify that they are indeed prime and dual feasible for problem \eqref{eq:implbias_reg} respectively. For this, it is enough to verify that:
 
\begin{equation*}
    -\sigma_{S_i}(\tilde{\alpha}_i) = I_{S_i}\left(f_{\mu^*}(x_i)\right) \quad \forall i\in [n] \setminus J_{reg},
\end{equation*}

where the sets $S_i$ are those corresponding to regression, described in Section \ref{section:generalprob}. We remark that $f_{\mu^*}$ is a piece-wise affine function with line segments meeting at $\{(x_j,y_j)\}_{j\in J_{reg}}$. By definition, if $i\not\in J_{reg}$ then $f_{\mu^*}(x_i) =y_i \Longrightarrow I_{S_i}\left(f_{\mu^*}(x_i)\right) = 0$. Finally,  $\sigma_{S_i}(\tilde{\alpha}_i) = \sigma_{S_i}(0) = 0 \quad \forall i\in [n] \setminus J_{reg}$.

\subsection{CLASSIFICATION}

Let $J_{class} = \{i\in[n] \, : \, x_i \in R_{class} \}$ and let $m=\left\lvert J_{class} \right\rvert$. 

Consider the following problem:

  \begin{equation}\label{eq:dummyprob_class} \begin{aligned} & \inf\limits_{\mu\in\mathcal{M}(\mathbb{U}, \mathbb{R}^k)} &&\int_{\mathbb{U}} \:\left\lVert d\mu(u) \right\rVert  & \\ & \text{subject to}   &&(e_{y_j} - e_l)^T f_\mu(x_j) \geq  \mathds{1}(y_j\not=l), \\ & && \forall j\in J_{class}, \quad \forall l\in[k].\end{aligned} \end{equation}

 Let $P_1$ and $P_2$ be the primal values for problems \eqref{eq:implbias_class} and \eqref{eq:dummyprob_class} respectively. As problem \eqref{eq:dummyprob_class} has less constraints than problem \eqref{eq:implbias_class}, we can remark that $P_2 \leq P_1$. By Proposition \ref{prop:restrict_data}, $\exists \mu^* \in \mathcal{M}(U_X,\mathbb{R}^k)$ and ${\alpha_1}^*, \dots , {\alpha_{m}}^* \in \mathbb{R}^k$ optimal for problem \eqref{eq:dummyprob_class} satisfying $\text{supp}(\mu^*) \subseteq F_{class}$.

Assume without loss of generality that $J_{class}$ is ordered, with $1=j_1<\ldots<j_m= n$. For $i\in J_{class}$, let $\psi(i)\in [m]$ denote the position of $i$ in the ordered list $j_1,\ldots,j_m$. We construct ${\tilde{\alpha}}_1,\dots,{\tilde{\alpha}}_n \in \mathbb{R}^k$ as follows:
\begin{equation}
    \tilde{\alpha}_i = \begin{cases} \mathbf{0} \quad &\text{if} \quad i\not\in R_{class} \\ \alpha^*_{\psi(i)} \quad & \text{if} \quad i\in R_{class}. \end{cases} 
\end{equation}

$\alpha_1^*,\dots,\alpha_m^*$ correspond to the $m$ data points $(x_{j_1},y_{j_1}), \dots, (x_{j_m}, y_{j_m})$. Our constructed ${\tilde{\alpha}}_1,\dots,{\tilde{\alpha}}_n$ is the extension of the above to all of the train data $(x_1,y_1),\dots, (x_n,y_n)$, where $i\not\in R_{class} \Longrightarrow {\tilde{\alpha}}_i = \mathbf{0}$.

By construction, $\mu^*$ and ${\tilde{\alpha}}_1, \ldots, {\tilde{\alpha_n}}$ attain strong duality for \eqref{eq:implbias_reg}. However, it remains to verify that they are indeed prime and dual feasible for problem \eqref{eq:implbias_reg} respectively. For this, it is enough to verify that:
 
\begin{equation*}
    -\sigma_{S_i}(\tilde{\alpha}_i) = I_{S_i}\left(f_{\mu^*}(x_i)\right) \quad \forall i\in [n] \setminus J_{class},
\end{equation*}

 where the sets $S_i$ are those corresponding to classification, described in Section \ref{section:generalprob}. 
 
 We begin by noting that $\forall l\in[k]$,  $f_{\mu^*}(\cdot)_k$ is a piece-wise affine function with line segments meeting at points contained in some subset of $\{(x_j,y_j)\}_{j\in J_{class}}$. By definition:

 \begin{equation*}
 \begin{aligned}
     i\not\in J_{class}\quad  &\Longrightarrow\quad f_{\mu^*}(x_i)^T(e_{y_i} - e_l) \geq \mathds{1}(y_i\not=1) \quad \quad \forall l \in [k] \\[0.75em] 
     &\Longrightarrow I_{S_i}\left(f_\mu(x_i)\right) = 0 \quad \quad \forall i\in [n] \setminus J_{class}
 \end{aligned}
 \end{equation*}
 
 Finally,  $\sigma_{S_i}(\tilde{\alpha}_i) = \sigma_{S_i}(\mathbf{0}) = 0 \quad \forall i\in [n] \setminus J_{class}$.

\section{TRAJECTORY OF GRADIENT DESCENT}\label{appendix:gradtraj}

Figures \ref{subfig:grad_traj_reg} / \ref{subfig:grad_traj_class} depict the angles formed between the first 1000 gradients obtained during training for the worst performing regression / classification models.  Despite both models being over-parameterized for the problem, it is clear that the optimization route for the models was not a straight line. Similar results were seen over all of the thirty random intializations.

\begin{figure}[ht]
     \centering
     \begin{subfigure}[ht]{0.49\textwidth}
         \centering
         \includegraphics[width=0.7\textwidth]{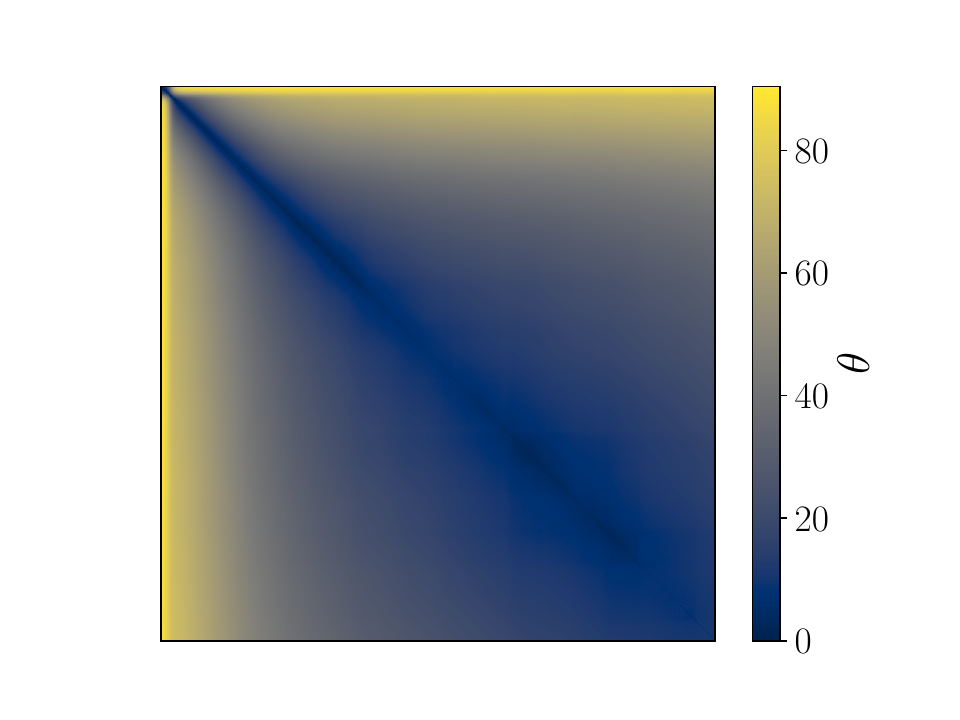}
         \caption{}
         \label{subfig:grad_traj_reg}
     \end{subfigure}
     \hfill
     \begin{subfigure}[ht]{0.49\textwidth}
         \centering
         \includegraphics[width=0.7\textwidth]{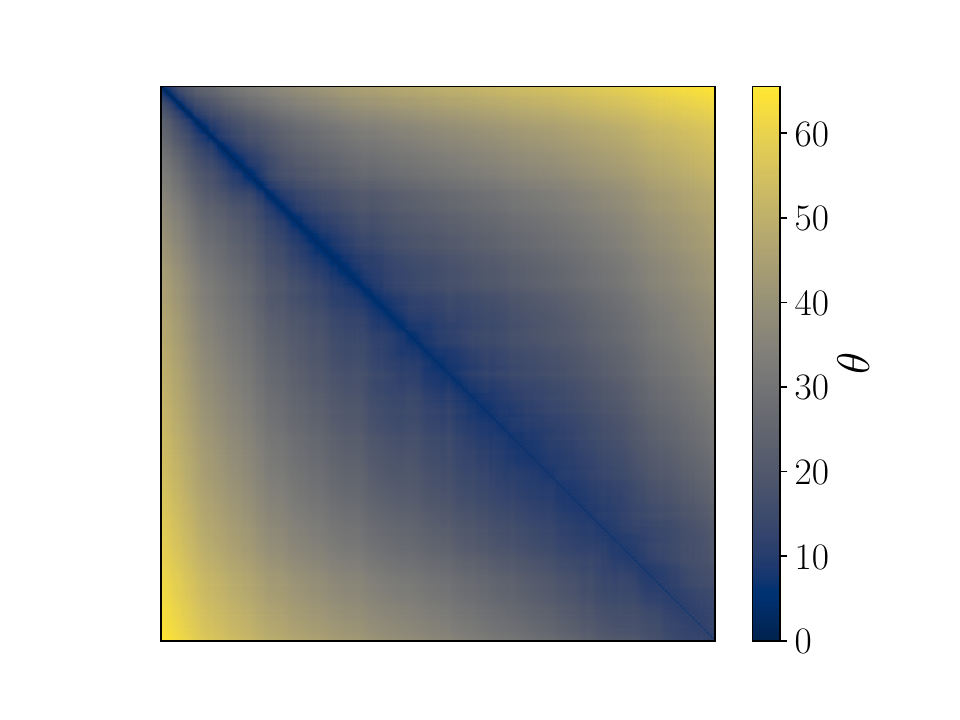}
         \caption{}
         \label{subfig:grad_traj_class}
     \end{subfigure}
     \hfill
        \caption{Angle $\theta$ between the first 1000 gradients obtained during training for the regression model (left) and the classification model (right).}
        \label{fig:grad_trajs}
\end{figure}

\section{EXPERIMENTS WITH THREE-LAYER NEURAL NETWORKS}\label{appendix:3layer}

We provide results showing that the binning phenomenon observed in Section \ref{section:mst} only applies to two-layer neural networks. In other words, three-layer networks did not suffer the under-fitting we observed in Section~\ref{subsection:results_mst}. 

We trained a regression model with two hidden layers consisting of $1000$ and $250$ neurons (totalling $252,250$ parameters) using the square loss. The model was trained for ten random initializations of its weights, in the same manner as detailed in Section \ref{subsection:experiment_setup}. The RMSE for the ten random initializations is depicted by Figure \ref{subfig:3lrmse}.


\begin{figure}[ht]
     \centering
     \begin{subfigure}[ht]{0.49\textwidth}
         \centering
         \includegraphics[width=0.7\textwidth]{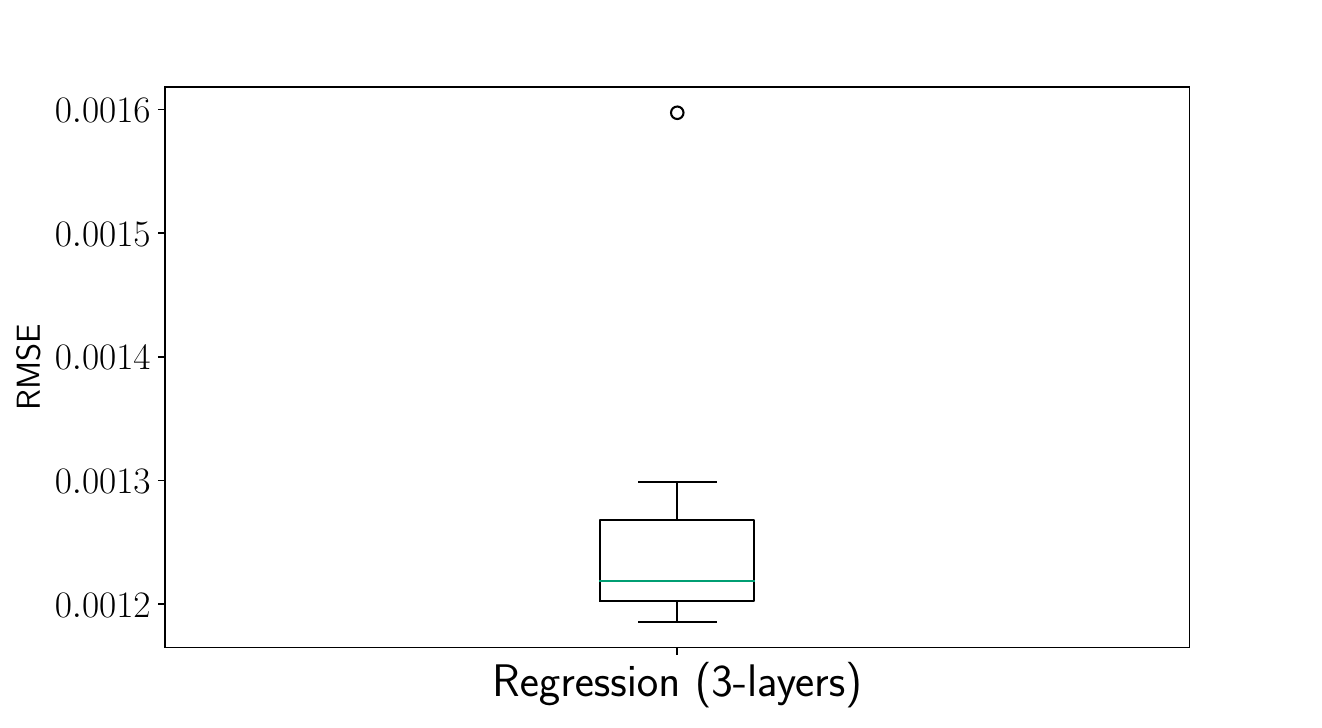}
         \caption{}
         \label{subfig:3lrmse}
     \end{subfigure}
     \hfill
     \begin{subfigure}[ht]{0.49\textwidth}
         \centering
         \includegraphics[width=0.7\textwidth]{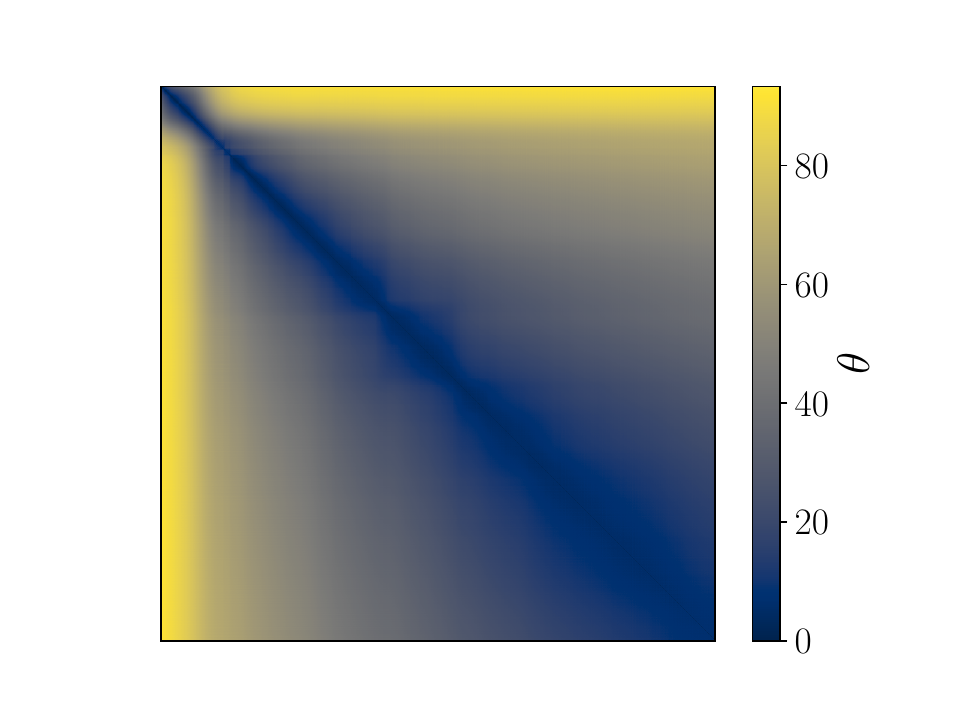}
         \caption{}
         \label{subfig:3ltraj}
     \end{subfigure}
     \hfill
        \caption{RMSE (left) and angle $\theta$ between the first 1000 gradients obtained during training (right) for the three-layer neural network.}
        \label{fig:3lrmse_and_trajs}
\end{figure}

\begin{figure}[!htp]
     \centering
     \begin{subfigure}[h]{0.49\textwidth}
         \centering
         \includegraphics[width=0.7\textwidth]{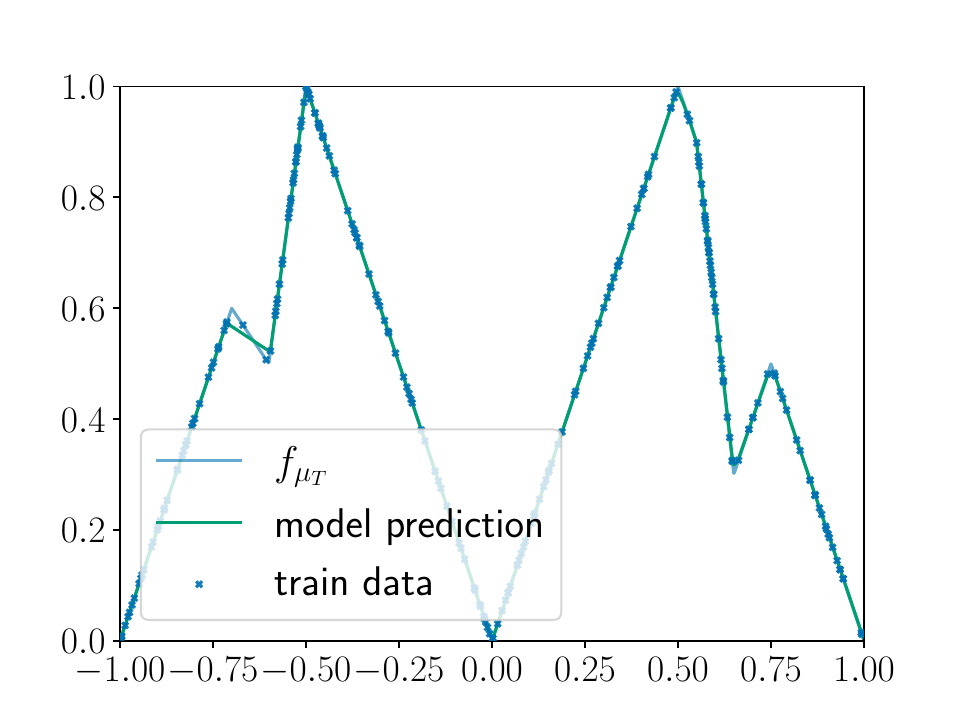}
         \caption{}
         \label{subfig:3lpreds}
     \end{subfigure}
     \hfill
     \begin{subfigure}[h]{0.49\textwidth}
         \centering
         \includegraphics[width=0.7\textwidth]{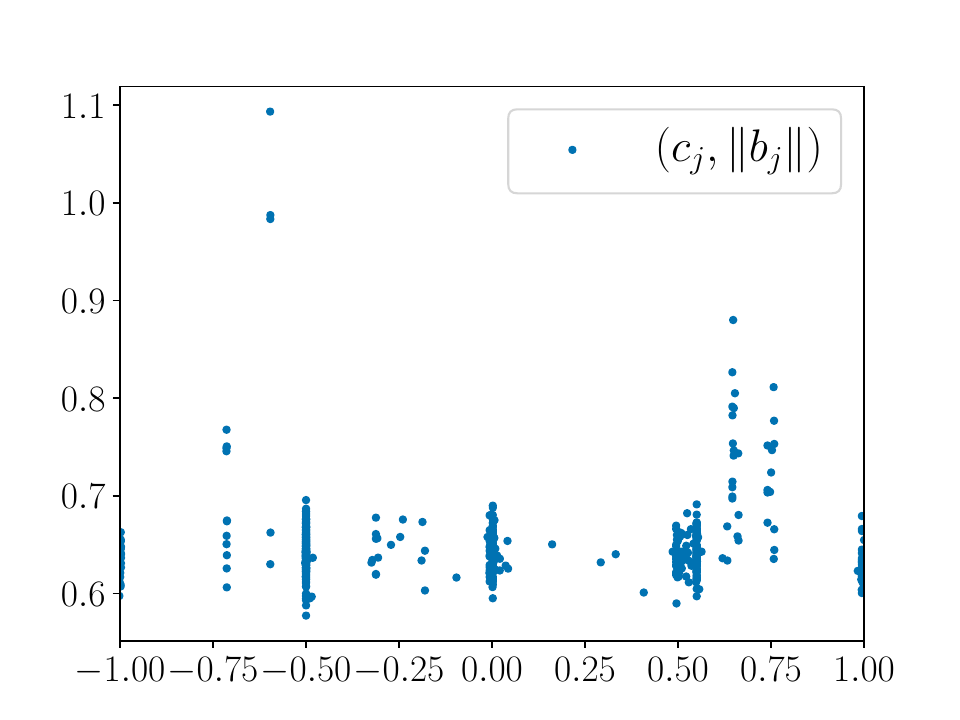}
         \caption{}
         \label{subfig:3lsupp}
     \end{subfigure}
     \hfill
        \caption{Worst performing model predictions (left) and support (right), over the ten random initializations.}
        \label{fig:3lpreds_and_feats}
\end{figure}

Figure \ref{subfig:3ltraj} shows that the gradient descent route during the first 1000 epochs of training was not linear. The predictions of the worst performing model over the 10 runs are depicted in Figure \ref{subfig:3lpreds}. It can be seen that the network did not suffer the under-fitting observed for two-layer networks, detailed in Section \ref{subsection:results_mst}. The support of the model is depicted in Figure \ref{subfig:3lsupp}.

\section{TWO-DIMENSIONAL OPTIMAL SUPPORTS FOR SYNTHETIC DATA}\label{appendix:highdimex}

We generated a regression problem from a random teacher model $\mu_T$ with three neurons, with weights being initialized as by \citet{initializing}. Our train data set consisted of $625$ data points $\left\{\left(x_i, f_{\mu_T}(x_i) \right)\right\}$, where the $x_{i}\in [-1,1]^2\times \{1\}$ are spaced evenly over the $25\times 25$ unit grid. To generate discretized labels we used $k=25$ bins.

We trained two over-parameterized models:

\begin{enumerate}
    \item \textbf{Regression Model:} 500 neurons in the hidden layer with scalar output. Trained using the square loss.
    \item \textbf{Classification Model:} 500 neurons in the hidden layer with vector output of dimension $k=25$. Trained using the cross-entropy loss.
\end{enumerate}

As mentioned in Section \ref{section:highdimex}, each feature $a_j$ is now characterized by the line where it ramps, which we will refer to as the feature's ``critical line''. That is to say, the points $x\in\mathbb{R}^3$ satisfying:
\begin{equation*}
    a_{j,1}x_1 + a_{j,2}x_2 + a_{j,3} = 0,
\end{equation*} 

where $x_3 = 1$. These can be thought of as the equivalent of $c_j$ defined in Section \ref{section:reparam}, but for the two-dimensional case.

The critical lines characterizing the features of the regression and classification models after training are depicted in Figure \ref{fig:3dreg_supp} and \ref{fig:3dclass_sup}, respectively. Features with critical lines which do not cross the unit square only correspond to affine transformations of the resulting prediction, and for this reason can be ignored. Similarly, features killed by the output layer\footnote{That is to say the features $a_j$ such that $\|a_j\|\|b_j\|$ is very small relative to other features.} since their contributions to the model's prediction are irrelevant.

We see that the regression model recovers a sparse support, whilst the classification model's features are more evenly distributed over unit square corresponding to $(x_1,x_2)$. These observations are similar to $R_{reg}$ and $R_{class}$ in the one-dimensional case, suggesting that the difference in implicit bias between regression and classification support we identified in one-dimensional problems may hold in more general situations.

\end{document}